\newcolumntype{L}[1]{>{\raggedright\let\newline\\\arraybackslash\hspace{0pt}}m{#1}}
\newcolumntype{C}[1]{>{\centering\let\newline\\\arraybackslash\hspace{0pt}}m{#1}}
\newcolumntype{R}[1]{>{\raggedleft\let\newline\\\arraybackslash\hspace{0pt}}m{#1}}
\newcommand{\beq}{\vspace{0mm}\begin{equation}}
\newcommand{\eeq}{\vspace{0mm}\end{equation}}
\newcommand{\beqs}{\vspace{0mm}\begin{eqnarray}}
\newcommand{\eeqs}{\vspace{0mm}\end{eqnarray}}
\newcommand{\barr}{\begin{array}}
\newcommand{\earr}{\end{array}}
\newcommand{\Imat}{{\bf I}}
\newcommand{\mv}[0]{{\boldsymbol{m}}}
\newcommand{\rv}{\boldsymbol{r}}
\newcommand{\xv}{\boldsymbol{x}}
\newcommand{\cdotv}{\boldsymbol{\cdot}}
\newcommand{\Phimat}{\boldsymbol{\Phi}}
\newcommand{\thetav}{\boldsymbol{\theta}}
\newcommand{\phiv}{\boldsymbol{\phi}}
\newcommand{\E}{\mathbb{E}}
\newtheorem{thm}{Theorem} %[section]
\newtheorem{cor}[thm]{Corollary}
\newtheorem{lem}[thm]{Lemma}
\title{
The Poisson Gamma Belief Network
}
\author{
Mingyuan Zhou 
%\thanks{ Use footnote for providing further information
%about author (webpage, alternative address)---\emph{not} for acknowledging
%funding agencies.} 
\\
%Department of Information, Risk, and Operations Management,
%IROM Department\\ %, 
McCombs School of Business\\
The University of Texas at Austin\\
Austin, TX 78712, USA \\
%\texttt{\small{mingyuan.zhou@mccombs.utexas.edu}} 
%\texttt{\small{mzhou@utexas.edu}} 
\And
Yulai Cong \\
\!School of Elec. Engineering\! \\ %ineering\\
Xidian University\\
Xi'an, Shaanxi, %710071, 
China \\
%\texttt{\small{yulai\_cong@163.com}} 
\And
Bo Chen \\
School of Elec. Engineering \\ %ineering\\
Xidian University\\
Xi'an, Shaanxi, %710071, 
China \\
%\texttt{\small{bchen@mail.xidian.edu.cn}} \\
}
\begin{document}

\maketitle

\begin{abstract}
To infer a multilayer representation of high-dimensional count vectors, we propose the Poisson gamma belief network (PGBN) that factorizes each of its layers into the product of a connection weight matrix and the nonnegative real hidden units of the next layer. The PGBN's hidden layers are jointly trained with an upward-downward Gibbs sampler, each iteration of which upward samples Dirichlet distributed connection weight vectors starting from the first layer (bottom data layer), and then downward samples gamma distributed hidden units starting from the top hidden layer. The gamma-negative binomial process combined with a layer-wise training strategy allows the PGBN to infer the width of each layer given a fixed budget on the width of the first layer. The PGBN with a single hidden layer reduces to Poisson factor analysis. Example results on text analysis illustrate interesting relationships between the width of the first layer and the inferred network structure, and demonstrate that the PGBN, whose hidden units are imposed with correlated gamma priors, can add more layers to increase its performance gains over Poisson factor analysis, given the same limit on the width of the first layer. 

\end{abstract}

\section{Introduction}

There has been significant recent interest in deep learning. Despite its tremendous success in supervised learning,  inferring a multilayer data representation in an unsupervised manner remains a challenging problem \cite{Bengio+chapter2007,ranzato2007unsupervised,Bengio-et-al-2015-Book}. The sigmoid belief network (SBN), which connects the  binary units of adjacent layers %observations to the hidden units and the adjacent hidden layers
 via the sigmoid functions, infers a deep representation of multivariate binary vectors \cite{neal1992connectionist,saul1996mean}. The deep belief network (DBN) \cite{hinton2006fast} is a SBN whose top hidden layer is replaced by the restricted Boltzmann machine (RBM) \cite{POE} that is undirected. The deep Boltzmann machine (DBM) is an undirected deep network that connects the binary units of adjacent layers using the RBMs \cite{salakhutdinov2009deep}.  All these deep networks are designed to model binary observations. Although one may modify the bottom layer %these deep networks 
 to model Gaussian and multinomial observations,  the hidden units of these networks are still typically restricted to be binary \cite{salakhutdinov2009deep,larochelle2012neural,salakhutdinov2013learning}. 
  One may further consider the exponential family harmoniums \cite{welling2004exponential,xing2005mining} to construct more general networks with non-binary hidden units, but often at the expense of noticeably increased complexity in training and data fitting. 
 
 Moving beyond conventional deep networks using binary hidden units,
we construct a deep directed network with gamma distributed nonnegative real hidden units to unsupervisedly infer a multilayer representation of multivariate count vectors, with a simple but powerful mechanism to capture the correlations among the visible/hidden features across all layers and handle  highly overdispersed counts. %the overdispersion in count data. 
The proposed model is called the Poisson gamma belief network (PGBN), which   factorizes the observed count vectors under the Poisson likelihood into the product of a factor loading matrix and the gamma distributed hidden units (factor scores) of layer one; and further factorizes the shape parameters of the gamma hidden units of each layer   into the product of a connection weight matrix  and the gamma  hidden units of the next layer.  %under the gamma likelihood. %
Distinct from previous deep networks  that often utilize binary units for tractable inference and require tuning both the width (number of hidden units) of each layer and the network depth (number of layers), the PGBN employs nonnegative real  hidden units and automatically infers the widths of subsequent layers given a fixed budget on the width of its first layer.
Note that the budget could be infinite and hence the whole network can  grow without bound as more data are being observed. When the budget is finite and hence the ultimate capacity of the network is limited, we find that the PGBN equipped with a narrower first layer could increase its depth to match or even outperform a shallower network with a substantially wider first layer.

The gamma distribution density function has the highly desired strong non-linearity  for deep learning, 
but the existence of neither a conjugate prior nor a closed-form maximum likelihood estimate 
for its shape parameter makes a deep network with gamma hidden units appear unattractive. Despite seemingly difficult, we discover that, by generalizing the  data augmentation and marginalization techniques  for discrete data \cite{NBP2012},  one may propagate latent counts one layer at a time  from  the bottom data layer to the top hidden layer, with which one may derive an efficient upward-downward Gibbs sampler that, %can be  constructed to, 
one layer at a time in each iteration, upward samples Dirichlet distributed connection weight vectors and then downward samples gamma distributed hidden units.  

In addition to constructing a new deep network that well fits multivariate count data and developing an efficient upward-downward Gibbs sampler, other contributions of the paper include: 1) combining the gamma-negative binomial process \cite{NBP2012,NBP_CountMatrix} with a layer-wise training strategy to automatically infer the network structure; 2) revealing the relationship between the upper bound imposed on the width of the first  layer and the inferred widths of subsequent layers; 3) revealing the relationship between the network depth and the model's ability to model overdispersed counts; 
4) and generating  a multivariate high-dimensional  random count vector, whose distribution is governed by the PGBN, 
 by propagating  
%independent gamma random variables from %hidden units of 
the gamma hidden units of
the top hidden layer back  to the bottom data layer. 

\subsection{Useful count distributions and their relationships}

Let the Chinese restaurant table (CRT) distribution $l\sim\mbox{CRT}(n,r)$ represent the distribution of a random count generated as 
$
l=\sum_{i=1}^{n} b_i,~b_i\sim\mbox{Bernoulli}\left[{r}/{(r+i-1)}\right].\notag
$ Its probability mass function (PMF) can be expressed as $
 P(l\,|\,n,r) = \frac{\Gamma(r)r^l}{\Gamma(n+r)}|s(n,l)|,\notag
$
where $l\in\mathbb{Z}$, $\mathbb{Z}:=\{0,1,\ldots,n\}$, and $|s(n,l)|$ % = \frac{n!}{l!}\sum_{*}\prod_{t=1}^l\frac{1}{n_t}$
 are unsigned Stirling numbers of the first kind. Let $u\sim\mbox{Log}(p)$ denote the logarithmic distribution with PMF
$
P(u\,|\,p) = \frac{1}{-\ln(1-p)}\frac{p^u}{u},\notag
$
where $u\in\{1,2,\ldots\}$. Let $n\sim\mbox{NB}(r,p)$ denote the negative binomial (NB) distribution with PMF
$
P(n\,|\,r,p)=\frac{\Gamma(n+r)}{n!\Gamma(r)} p^n(1-p)^r,\notag
$
where $n\in\mathbb{Z}$. 
The NB distribution $n\sim\mbox{NB}(r,p)$ can be generated as a gamma mixed Poisson distribution as
$
n\sim\mbox{Pois}(\lambda),~\lambda\sim\mbox{Gam}\left[r,{p}/({1-p})\right],
$
where $p/(1-p)$ is the gamma  scale parameter. 
As shown in  \cite{NBP2012},
the joint distribution of $n$ and $l$ given $r$ and $p$ in 
$
l\sim\mbox{CRT}(n,r),~n\sim\mbox{NB}(r,p),\notag
$
where $l\in\{0,\ldots,n\}$ and $n\in\mathbb{Z}$, is the same as that in
$
n = \textstyle \sum_{t=1}^l u_t,~u_t\sim\mbox{Log}(p), ~l\sim\mbox{Pois}[-r\ln(1-p)], \notag%\label{eq:CompoundPo}
$
which is called the Poisson-logarithmic bivariate distribution, with PMF 
$
P(n,l\,|\,r,p)=\frac{|s(n,l)|r^l}{n!}p^n(1-p)^r\label{eq:Po-log}
$.

\section{The Poisson Gamma Belief Network}

Assuming the observations are multivariate count vectors $\xv_j^{(1)}\in\mathbb{Z}^{K_{0}}$, the generative model of the Poisson gamma belief network (PGBN) with $T$ hidden layers, from top to bottom, %whose hierarchy starts  from the top hidden layer and ends at the bottom data layer, 
 is expressed as
\beqs\displaystyle
&\thetav_j^{(T)}\sim\mbox{Gam}\left(\rv,1\big/c_j^{(T+1)}\right),\notag\\ 
&\cdots\notag\\
&\thetav_j^{(t)}\sim\mbox{Gam}\left(\Phimat^{(t+1)}\thetav_j^{(t+1)},1\big/c_j^{(t+1)}\right),\notag\\
&\cdots\notag\\
&\xv_j^{(1)} \sim \mbox{Pois}\left(\Phimat^{(1)}\thetav_j^{(1)}\right),~~\thetav_j^{(1)}\sim\mbox{Gam}\left(\Phimat^{(2)}\thetav_j^{(2)},%1/c_j^{(2)}\right), \notag\\ %
{p_j^{(2)}}\big/{\big(1-p_j^{(2)}\big)}\right). 
 \label{eq:PGBN}
\eeqs
The PGBN factorizes the count observation  $\xv_j^{(1)}$  into  the product of the factor loading $\Phimat^{(1)}\in\mathbb{R}_+^{K_{0}\times K_{1}}$
and  hidden units $\thetav_j^{(1)}\in\mathbb{R}_+^{ K_{1}}$ of layer one under the Poisson likelihood, where $ \mathbb{R}_+=\{x:x\ge 0\}$, and for $t=1,2,\ldots,T-1$, factorizes the shape parameters of the gamma distributed hidden units $\thetav_j^{(t)}\in\mathbb{R}_+^{K_{t}}$ of layer $t$  into the product of the connection weight matrix $\Phimat^{(t+1)}\in\mathbb{R}_+^{K_{t}\times K_{t+1}}$ and the hidden units $\thetav_j^{(t+1)}\in\mathbb{R}_+^{ K_{t+1}}$ of layer $t+1$; the top layer's hidden units $\thetav_j^{(T)}$ share the same vector $\rv=(r_1,\ldots,r_{K_T})' $ as their gamma shape parameters; and the $p_j^{(2)}$ are probability parameters and 
$\{1/c^{(t)}\}_{3,T+1}$ are gamma scale parameters, with $c_j^{(2)}:=\big(1-p_j^{(2)}\big)\big/p_j^{(2)}$.

For scale identifiabilty and ease of inference,  each column of $\Phimat^{(t)}\in\mathbb{R}_+^{K_{t-1}\times K_{t}}$ is restricted to have a unit $L_1$ norm. %, which help propagate latent counts across layers and derive efficient Gibbs sampling inference. 
To complete the hierarchical model, for $t\in\{1,\ldots,T-1\}$, we let %  and  $c_j^{(t)}$ be drawn as %from the Dirichlet distribution as
\beqs
&\phiv_k^{(t)}\sim\mbox{Dir}\big(\eta^{(t)},\ldots,\eta^{(t)}\big), ~~r_k\sim\mbox{Gam}\big(\gamma_0/K_T,1/c_0\big) \label{eq:Phi}%~~\gamma_0\sim\mbox{Gam}(a_0,1/b_0),
%&~~~~~~~p_j^{(2)}\sim\mbox{Beta}(a_0,b_0),~~~~~c_j^{(t)}\sim\mbox{Gam}(e_0,1/f_0) 
\eeqs %with $c_j^2:=(1-p_j^{(2)})/p_j^{(2)}$, 
and impose $c_0\sim\mbox{Gam}(e_0,1/f_0)$ %prior  on $c_0$ and  the 
and $\gamma_0\sim\mbox{Gam}(a_0,1/b_0)$; % prior  on $\gamma_0$, 
and for $t\in\{3,\ldots,T+1\}$, we let
\beqs
%&\phiv_k^{(t)}\sim\mbox{Dir}\big(\eta_{1}^{(t)},\ldots,\eta_{K_{t-1}}^{(t)}\big), ~~r_k\sim\mbox{Gam}\big(\gamma_0/K^{(T)},1/c_0\big),\notag\\
&p_j^{(2)}\sim\mbox{Beta}(a_0,b_0),~~~c_j^{(t)}\sim\mbox{Gam}(e_0,1/f_0). \label{eq:c_j}%~~~c_0\sim\mbox{Gam}(e_0,1/f_0). 
\eeqs
We expect the correlations between the rows (features) of $(\xv_1^{(1)},\ldots,\xv_J^{(1)})$ to be captured by the columns of $\Phimat^{(1)}$, and the correlations between
the rows (latent features) of $(\thetav_1^{(t)}, \ldots,\thetav_J^{(t)})$ to be captured by the columns of $\Phimat^{(t+1)}$. Even if all $\Phimat^{(t)}$ for $t\ge 2$  are  identity matrices, indicating no correlations between latent features, our analysis will show that  a deep structure with $T\ge 2$ could still benefit data fitting by better modeling the variability  of the latent features $\thetav_j^{(1)}$. % with a series of gamma random variables chained one after another through their gamma shape parameters.  % using a gamma chain. 

% that is difficult to directly optimize. 

%\subsection{Connections to existing models}
 % $p_j$ is a probability parameter; $1/c_t$ is the gamma scale parameter 
 \textbf{Sigmoid and deep belief networks}.
Under the hierarchical   model in (\ref{eq:PGBN}), given the connection weight matrices, 
 the joint distribution of the count observations and gamma hidden units of the PGBN can be expressed, similar to those of the sigmoid and deep belief networks \cite{Bengio-et-al-2015-Book},  as
 \beqs
&\!\!\!\!P\left(\xv_j^{(1)},\{\thetav_{j}^{(t)}\}_t  \,\Big|\,\{\Phimat^{(t)}\}_t\right) = P\left(\xv_j^{(1)}\,\Big|\,\Phimat^{(1)},\thetav_{j}^{(1)}\right)\! \left[\prod_{t=1}^{T-1}\!P\left(\thetav_{j}^{(t)}\,\Big|\,\Phimat^{(t+1)},\thetav_{j}^{(t+1)}\right)\right] \!P\left(\thetav_{j}^{(T)}\right).\notag
\eeqs
With $\phiv_{v:}$ representing the $v$th row  $\Phimat$, for the  gamma hidden units $\theta_{vj}^{(t)}$ we have  % layer  $P\left(\thetav_{j}^{(t)}\Big|\Phimat^{(t+1)},\thetav_{j}^{(t+1)}\right)$ consists of a model that factorize the gamma shape parameters as
%are factorized as
\beqs
&P\left(\theta_{vj}^{(t)}\,\Big|\,\phiv_{v:}^{(t+1)},\thetav_j^{(t+1)},c_{j+1}^{(t+1)}\right) =\frac{ \left(c_{j+1}^{(t+1)}\right)^{\phiv_{v:}^{(t+1)}\thetav_j^{(t+1)} }}{\Gamma\left(\phiv_{v:}^{(t+1)}\thetav_j^{(t+1)}\right)}  \left(\theta_{vj}^{(t)}\right)^{\phiv_{v:}^{(t+1)}\thetav_j^{(t+1)}-1}e^{-c_{j+1}^{(t+1)} \theta_{vj}^{(t)} }, \label{eq:gamma}
\eeqs
which are highly nonlinear functions that are strongly desired in deep learning.  
By contrast, with the sigmoid function $\sigma(x)=1/(1+e^{-x})$ and bias terms $b_{v}^{(t+1)}$,  a sigmoid/deep belief network  would connect the binary hidden units $\theta_{vj}^{(t)}\in\{0,1\}$ of layer $t$ (for deep belief networks, $t<T-1$ )  to the product of the connection weights and binary hidden units of the next layer with
\beq
P\left(\theta_{vj}^{(t)}=1\,\big|\,\phiv_{v:}^{(t+1)},\thetav_j^{(t+1)},b_{v}^{(t+1)}\right) =\sigma\left(b_{v}^{(t+1)}+\phiv_{v:}^{(t+1)}\thetav_j^{(t+1)}\right). \label{eq:sigmoid}
\eeq
%\beq
%P(\theta_{vj}^{(t)}|\phiv_{v:}^{(t+1)},\thetav_j^{(t+1)}) =\left[\sigma\left(b_{vj}^{(t+1)}+\phiv_{v:}^{(t+1)}\thetav_j^{(t+1)}\right)\right]^{\theta_{vj}^{(t)}} \left[\sigma\left(-b_{vj}^{(t+1)}-\phiv_{v:}^{(t+1)}\thetav_j^{(t+1)}\right)\right]^{1-\theta_{vj}^{(t)}}
%\eeq
Comparing (\ref{eq:gamma}) with (\ref{eq:sigmoid}) clearly shows the differences between %\that 
 the gamma nonnegative hidden units and the sigmoid link  based binary hidden units. Note that the rectified linear units have emerged as powerful alternatives of sigmoid units to introduce nonlinearity \cite{nair2010rectified}. It would be interesting to use the gamma units to  %units could be combined with the 
 %and rectified linear units to 
 introduce nonlinearity in the positive region of the  rectified linear units.

\textbf{Deep Poisson factor analysis.} 
With $T=1$, the PGBN specified by (\ref{eq:PGBN})-(\ref{eq:c_j})
reduces to Poisson factor analysis (PFA) using the (truncated) gamma-negative binomial process \cite{NBP2012},  which is also related to latent Dirichlet allocation \cite{LDA} if the Dirichlet priors are imposed on both $\phiv_k^{(1)}$ and $\thetav_j^{(1)}$. % and its nonparameteric Bayesian version various extensions \cite{HDP,wallach09}. 
% Following \cite{BNBP_PFA_AISTATS2012,NBP2012},
With $T\ge 2$, the  PGBN is related to the gamma Markov chain hinted by Corollary~2 of \cite{NBP2012} and realized in  \cite{GP_DFA_AISTATS2015},  the deep exponential family of \cite{ranganath2014deep}, and the deep PFA of \cite{Gan2015DeepPFA}. %In the proposed PGBN, we chain the gamma random variables  via the gamma shape parameters. By contrast,  
Different from the PGBN, 
 in \cite{ranganath2014deep}, it is the gamma scale  but not shape parameters that are chained and factorized; %which allows learning the network parameters using black box variational inference, 
in \cite{Gan2015DeepPFA}, it is the correlations between binary topic usage indicators but not the full connection weights that are captured; and neither   \cite{ranganath2014deep} nor \cite{Gan2015DeepPFA} 
%there is
provide  a 
principled way %for the PGBN 
to learn the network structure. 
 Below we %show how to 
 break the PGBN of $T$ layers  into $T$ related submodels that are solved with the same subroutine.
 %, as discussed below. %, %, and sequentially conquer them one at layer a time using the inference techniques developed for PFA, % each of which is a problem of solving  PFA, 
%as discussed below. 

\subsection{The propagation of latent counts and model properties}

\begin{lem} [Augment-and-conquer the PGBN]\label{lem:PGBN}
With $p_j^{(1)}: = 1-e^{-1}$ %,  $c_j^{(2)}:=(1-p_j)/p_j$  and 
and
\beq
p_{j}^{(t+1)} := {-\ln(1-p_j^{(t)})}\Big/\left[c_j^{(t+1)}-\ln(1-p_j^{(t)})\right] \label{eq:p}
\eeq
for $t=1,\ldots,T$, one may connect the  observed (if $t=1$) or some latent (if $t\ge 2$) counts  $\xv_j^{(t)}\in\mathbb{Z}^{K_{t-1}}$ to the product $\Phimat^{(t)}\thetav_j^{(t)}$ at layer $t$  under the Poisson likelihood  as 
\beq
\xv_j^{(t)}\sim\emph{\mbox{Pois}}\left[-\Phimat^{(t)}\thetav_j^{(t)}\ln\left(1-p_j^{(t)}\right)\right].\label{eq:deepPFA_aug}
\eeq
\end{lem}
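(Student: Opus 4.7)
The plan is to prove the lemma by induction on $t$, with the augmentation/marginalization chain at each level being exactly the Poisson--logarithmic bivariate identity recalled in Section~1.1. The base case $t=1$ is immediate: the choice $p_j^{(1)}=1-e^{-1}$ gives $-\ln(1-p_j^{(1)})=1$, so the PGBN's own specification $\xv_j^{(1)}\sim\mathrm{Pois}(\Phimat^{(1)}\thetav_j^{(1)})$ already has the claimed form. It remains to construct $\xv_j^{(t+1)}$ from $\xv_j^{(t)}$ for $t\ge 1$ so that the inductive hypothesis propagates.

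Assume $\xv_j^{(t)}\sim\mathrm{Pois}\!\left(-\Phimat^{(t)}\thetav_j^{(t)}\ln(1-p_j^{(t)})\right)$. First I would split each coordinate by Poisson thinning, writing $x_{vj}^{(t)}=\sum_{k=1}^{K_t}x_{vkj}^{(t)}$ with independent $x_{vkj}^{(t)}\sim\mathrm{Pois}\!\left(-\phi_{vk}^{(t)}\theta_{kj}^{(t)}\ln(1-p_j^{(t)})\right)$, then sum over $v$ and use the unit-$L_1$-norm constraint $\sum_v\phi_{vk}^{(t)}=1$ to obtain the independent latent totals $x_{\cdot kj}^{(t)}:=\sum_v x_{vkj}^{(t)}\sim\mathrm{Pois}\!\left(-\theta_{kj}^{(t)}\ln(1-p_j^{(t)})\right)$. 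Next I would marginalize out $\theta_{kj}^{(t)}\sim\mathrm{Gam}\!\left(\phiv_{k:}^{(t+1)}\thetav_j^{(t+1)},1/c_j^{(t+1)}\right)$ via the gamma--Poisson construction recalled in the preliminaries: the mixing rate is gamma with shape $\phiv_{k:}^{(t+1)}\thetav_j^{(t+1)}$ and scale $-\ln(1-p_j^{(t)})/c_j^{(t+1)}$, so $x_{\cdot kj}^{(t)}\sim\mathrm{NB}\!\left(\phiv_{k:}^{(t+1)}\thetav_j^{(t+1)},\,q\right)$ with $q/(1-q)=-\ln(1-p_j^{(t)})/c_j^{(t+1)}$, which rearranges to exactly $q=p_j^{(t+1)}$ as defined in (\ref{eq:p}).

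At this stage the Poisson--logarithmic bivariate identity delivers the next-layer latent count directly: introduce $x_{kj}^{(t+1)}\sim\mathrm{CRT}\!\left(x_{\cdot kj}^{(t)},\phiv_{k:}^{(t+1)}\thetav_j^{(t+1)}\right)$, so that the joint $(x_{\cdot kj}^{(t)},x_{kj}^{(t+1)})$ has the same distribution as $(\sum_{u=1}^{l}U_u,\,l)$ with $l\sim\mathrm{Pois}[-r\ln(1-p)]$, $U_u\sim\mathrm{Log}(p)$ and $(r,p)=(\phiv_{k:}^{(t+1)}\thetav_j^{(t+1)},p_j^{(t+1)})$. Marginalizing out $x_{\cdot kj}^{(t)}$ then yields $x_{kj}^{(t+1)}\sim\mathrm{Pois}\!\left(-\phiv_{k:}^{(t+1)}\thetav_j^{(t+1)}\ln(1-p_j^{(t+1)})\right)$, independently across $k$. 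Stacking these coordinates produces $\xv_j^{(t+1)}\sim\mathrm{Pois}\!\left(-\Phimat^{(t+1)}\thetav_j^{(t+1)}\ln(1-p_j^{(t+1)})\right)$, closing the induction.

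The only nontrivial bookkeeping is checking that the probability parameter produced by the gamma--Poisson marginalization in step~(c) is precisely the $p_j^{(t+1)}$ prescribed in (\ref{eq:p}); this is where the recursion definition is forced, and it is a one-line algebraic verification of $q=\alpha/(c+\alpha)$ with $\alpha=-\ln(1-p_j^{(t)})$ and $c=c_j^{(t+1)}$. A secondary point worth noting is the self-consistency at $t=1$: since $c_j^{(2)}:=(1-p_j^{(2)})/p_j^{(2)}$, plugging $-\ln(1-p_j^{(1)})=1$ into (\ref{eq:p}) returns $p_j^{(2)}$, so the externally sampled Beta parameter and the recursively defined one agree. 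No separate argument is needed at the top layer, since the lemma only claims the Poisson form for $t=1,\ldots,T$ and the induction terminates once $\xv_j^{(T)}$ has been produced.
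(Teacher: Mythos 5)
Your proof is correct and follows essentially the same route as the paper's: induction on $t$, Poisson augmentation of $x_{vj}^{(t)}$ into $K_t$ latent counts, marginalization of $\Phimat^{(t)}$ via the unit $L_1$-norm columns, gamma--Poisson marginalization yielding the negative binomial with parameter $p_j^{(t+1)}$, and the Poisson--logarithmic (CRT) identity to recover the next layer's Poisson counts. Your explicit verification that the recursion \eqref{eq:p} is forced by the gamma scale $-\ln(1-p_j^{(t)})/c_j^{(t+1)}$, and the self-consistency check at $t=1$ with $c_j^{(2)}=(1-p_j^{(2)})/p_j^{(2)}$, are welcome details the paper leaves implicit.
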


\begin{proof}
By definition (\ref{eq:deepPFA_aug}) is true for layer $t=1$.
Suppose that (\ref{eq:deepPFA_aug}) is true for layer $t\ge2$,
% then we can link the counts $\xv_j^{(t)}$ to the model parameters at  layer $t$ as
%\beq
%\xv_j^{(t)}\sim\mbox{Pois}\left(-\Phimat^{(t)}\thetav_j^{(t)}\ln(1-p_j^{(t)})\right)
%\eeq
then we can augment each count  $x^{(t)}_{vj}$ into the summation of $K_{t}$  latent counts  that are smaller or equal as
\beq \textstyle
x^{(t)}_{vj}=\sum_{k=1}^{K_{t}} x^{(t)}_{vjk},~~x^{(t)}_{vjk}\sim\mbox{Pois}\left[-\phi_{vk}^{(t)}\theta_{kj}^{(t)}\ln\left(1-p_j^{(t)}\right)\right],\label{eq:PoAug}
\eeq
where $v\in\{1,\ldots,K_{t-1}\}$.
With $m^{(t)(t+1)}_{kj}:=x^{(t)}_{\cdotv jk} := \sum_{v=1}^{K_{t-1}}x^{(t)}_{vjk}$ representing the number of times that factor $k\in\{1,\ldots,K_t\}$ of layer $t$ appears in observation $j$ and $\mv^{(t)(t+1)}_{j}: = \big(x^{(t)}_{\cdotv j1},\ldots,x^{(t)}_{\cdotv jK_{t}}\big)'$, since $\sum_{v=1}^{K_{t-1}}\phi_{vk}^{(t)} = 1$, we can marginalize  out $\Phimat^{(t)}$ as in \cite{BNBP_PFA_AISTATS2012}, leading to 
\beq
\mv^{(t)(t+1)}_{j}\sim\mbox{Pois}\left[-\thetav_j^{(t)}\ln\left(1-p_j^{(t)}\right)\right]. \notag
\eeq
Further marginalizing  out  the gamma distributed $\thetav_j^{(t)}$ from the above Poisson likelihood leads to
\beq
\mv^{(t)(t+1)}_{j} \sim\mbox{NB}\left(\Phimat^{(t+1)}\thetav_j^{(t+1)}, p_j^{(t+1)}\right). \label{eq:NBAug}
\eeq
The $k$th element of $\mv^{(t)(t+1)}_{j}$ can be augmented under its compound Poisson representation as % Denote $\phiv_{v:}^{(t+1)}$ as the $v$th row of $\Phimat^{(t+1)}$, where $v\in\{1,\ldots,K_{t}\}$.
%We can augment $m^{(t)(t+1)}_{ vj}$ as %:=x^{(t)}_{\cdotv vj}$  as
%Since the joint distribution of $x_{kj}^{(t+1)}$ $x_{kj}^{(t+1)}\sim\mbox{CRT}(x^{(t)(t+1)}_{ kj},\phiv_{k:}^{(t+1)}\thetav_j^{(t+1)})$ and $x^{(t)(t+1)}_{ kj} \sim\mbox{NB}\left[\phiv_{k:}^{(t+1)}\thetav_j^{(t+1)}, p_j^{(t+1)}\right]$ is the same as t
\beqs
&m^{(t)(t+1)}_{ kj} = \sum_{\ell=1}^{x^{(t+1)}_{kj}} u_{\ell},~~u_{\ell}\sim\mbox{Log}(p_j^{(t+1)}), ~~
x_{kj}^{(t+1)}\sim\mbox{Pois}\left[-\phiv_{k:}^{(t+1)}\thetav_j^{(t+1)}\ln\left(1-p_j^{(t+1)}\right)\right]. \notag%~l^{(0)(1)}_{kj}\sim\mbox{Pois}(-\phi_{k_0k_1}^{(1)}\theta_{k_1j}^{(1)}\ln(1-p_j^{(1)}))
\eeqs
Thus if (\ref{eq:deepPFA_aug})  is true for layer $t$, then %(\ref{eq:deepPFA_aug}) 
it is also true for layer $t+1$. %The proof is complete by induction. 
%where %$u\sim\mbox{Log}(p)$ represents the logarithmic distribution with probability  parameter $p$ and
%\beq
%\xv_j^{(t+1)}\sim\mbox{Pois}\left(-\Phimat^{(t+1)}\thetav_j^{(t+1)}\ln(1-p_j^{(t+1)})\right).
%\eeq
%Note that when $t=1$, since $p_j^{(1)} = 1-e^{-1}$, we have (\ref{eq:deepPFA_aug}) according to the  definition of the model.  Therefore, by induction,
% $(\ref{eq:deepPFA_aug})$ is true for $t=1,\ldots,T$.
\end{proof}
%\begin{cor}
%\beq
%\mv^{(t)(t+1)}_{j} \sim\emph{\mbox{NB}}\left(\Phimat^{(t+1)}\thetav_j^{(t+1)}, p_j^{(t+1)}\right). \notag
%\eeq
%\end{cor}
\begin{cor} [Propagate the latent counts upward] \label{cor:PGBN} Using Lemma 4.1 of \cite{BNBP_PFA_AISTATS2012} on (\ref{eq:PoAug}) and Theorem 1 of \cite{NBP2012} on (\ref{eq:NBAug}), we can propagate the  latent counts $x^{(t)}_{vj} $ of layer \emph{$t$}  upward to layer \emph{$t+1$} as
%\beq
%\left.x^{(t+1)}_{kj} \right |x^{(t)(t+1)}_{kj}, \phiv_{k:}, \thetav_j= \sum_{i=1}^{x^{(t)(t+1)}_{kj} } h_i,~h_{i}\sim\emph{\mbox{Bernoulli}}\left(\frac{\phiv_{k:}^{(t+1)}\thetav_{j}^{(t+1)}}{\phiv_{k:}^{(t+1)}\thetav_{j}^{(t+1)}+i-1}\right)
%\eeq
\begin{align}
& \textstyle
\left\{\left(x^{(t)}_{vj1},\ldots,x^{(t)}_{vjK_{t}}\right)\,\Big|\,x^{(t)}_{vj}, \phiv_{v:}^{(t)}, \thetav_j^{(t)}\right\}\sim\emph{\mbox{Mult}}\left(x^{(t)}_{vj}, \frac{\phi^{(t)}_{v1}\theta^{(t)}_{1j}}{\sum_{k=1}^{K_{t}}\phi^{(t)}_{vk}\theta^{(t)}_{kj}},\ldots,\frac{\phi^{(t)}_{vK_{t}}\theta^{(t)}_{K_{t}j}}{\sum_{k=1}^{K_{t}}\phi^{(t)}_{vk}\theta^{(t)}_{kj}}\right), \label{eq:step1}\\
&~~~~~~~~~~~~~~~\left( \left.x^{(t+1)}_{kj} \,\right |\,m^{(t)(t+1)}_{kj}, \phiv_{k:}^{(t+1)}, \thetav_j^{(t+1)}\right) \sim \emph{\mbox{CRT}}\left(m^{(t)(t+1)}_{kj}, \phiv_{k:}^{(t+1)}\thetav_{j}^{(t+1)}\right).\label{eq:CRT}
\end{align}
\end{cor}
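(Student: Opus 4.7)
The strategy is to realize each of the two claims as a posterior statement about latent variables that already appeared in the augmentations used to prove Lemma 1, and then invoke the two cited auxiliary results.

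For equation (\ref{eq:step1}), I would start from the Poisson augmentation (\ref{eq:PoAug}) established inside the proof of Lemma 1: conditional on $\phiv_{v:}^{(t)}$ and $\thetav_j^{(t)}$, the counts $\{x^{(t)}_{vjk}\}_{k=1}^{K_t}$ are mutually independent Poisson random variables with means $\lambda_k := -\phi^{(t)}_{vk}\theta^{(t)}_{kj}\ln(1-p_j^{(t)})$. By the standard Poisson splitting result (Lemma 4.1 of \cite{BNBP_PFA_AISTATS2012}), the joint law of $(x^{(t)}_{vj1},\ldots,x^{(t)}_{vjK_t})$ conditional on their sum $x^{(t)}_{vj}=\sum_k x^{(t)}_{vjk}$ is multinomial with parameter $x^{(t)}_{vj}$ and probability vector $\lambda_k/\sum_{k'} \lambda_{k'}$; the common factor $-\ln(1-p_j^{(t)})$ cancels out in the numerator and denominator, yielding exactly the probabilities $\phi^{(t)}_{vk}\theta^{(t)}_{kj}/\sum_{k'}\phi^{(t)}_{vk'}\theta^{(t)}_{k'j}$ stated in (\ref{eq:step1}).

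For equation (\ref{eq:CRT}), I would exploit the compound Poisson representation that was derived at the end of the Lemma 1 proof. There it was shown that marginalizing out $\thetav_j^{(t)}$ produces $m^{(t)(t+1)}_{kj}\sim\mathrm{NB}\bigl(\phiv_{k:}^{(t+1)}\thetav_j^{(t+1)},p_j^{(t+1)}\bigr)$ via $m^{(t)(t+1)}_{kj}=\sum_{\ell=1}^{x^{(t+1)}_{kj}} u_\ell$ with $u_\ell\sim\mathrm{Log}(p_j^{(t+1)})$ and $x^{(t+1)}_{kj}\sim\mathrm{Pois}[-\phiv_{k:}^{(t+1)}\thetav_j^{(t+1)}\ln(1-p_j^{(t+1)})]$. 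Since this is precisely the Poisson–logarithmic bivariate construction recalled in the preliminaries, Theorem 1 of \cite{NBP2012} identifies the joint law of $(m^{(t)(t+1)}_{kj},x^{(t+1)}_{kj})$ with that of a pair generated as $x^{(t+1)}_{kj}\sim\mathrm{CRT}\bigl(m^{(t)(t+1)}_{kj},\phiv_{k:}^{(t+1)}\thetav_j^{(t+1)}\bigr)$ followed by the NB draw. Reading off the conditional of $x^{(t+1)}_{kj}$ given $m^{(t)(t+1)}_{kj}$ under the latter construction yields (\ref{eq:CRT}).

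Because the statement is labelled a corollary and the two substantive lemmas are cited explicitly, there is no real obstacle other than careful bookkeeping of which variable plays the role of $n$, $r$, and $p$ in each invocation. The only subtle point worth being explicit about is that in equation (\ref{eq:CRT}) one must condition on $(m^{(t)(t+1)}_{kj},\phiv_{k:}^{(t+1)},\thetav_j^{(t+1)})$ rather than on the original $x^{(t)}_{vj}$'s — the multinomial aggregation in (\ref{eq:step1}) and the sufficiency of $m^{(t)(t+1)}_{kj}=x^{(t)}_{\cdot j k}$ for the upward message justify this reduction, so no information is lost when propagating counts one layer at a time.
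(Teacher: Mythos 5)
Your proposal is correct and follows exactly the route the paper intends: the corollary is stated as a direct consequence of applying the Poisson-splitting lemma (Lemma 4.1 of \cite{BNBP_PFA_AISTATS2012}) to the augmentation (\ref{eq:PoAug}) and the Poisson--logarithmic/CRT equivalence (Theorem 1 of \cite{NBP2012}) to the negative binomial representation (\ref{eq:NBAug}), both of which were established inside the proof of Lemma \ref{lem:PGBN}. Your bookkeeping of which variables play the roles of $n$, $r$, and $p$, and the observation that the $-\ln(1-p_j^{(t)})$ factor cancels in the multinomial probabilities, are exactly what the paper leaves implicit.
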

%\vspace{-3mm}
As $x^{(t)}_{\cdotv j} = m^{(t)(t+1)}_{\cdotv j}$ and $x^{(t+1)}_{kj}$ is in the same order as  $\ln\big( m^{(t)(t+1)}_{kj}\big)$, % the table number  is in the order of the logarithm of the customer number in a Chinese restaurant process,
the total count of layer $t+1$, expressed as  $\sum_j x^{(t+1)}_{\cdotv j}$,  would often be much smaller than that of layer $t$, expressed as $\sum_j x^{(t)}_{\cdotv j}$. Thus the PGBN may use $\sum_j x^{(T)}_{\cdotv j}$ as a simple criterion to decide whether to add more layers. % further increase the depth of the network. % beyond $T$ layers. 
%\vspace{-2mm}
\subsection{Modeling overdispersed counts} %\vspace{-1.5mm}
In comparison to a single-layer shallow model with $T=1$ that assumes the hidden units of layer one to be independent in the prior, the multilayer deep model with $T\ge 2$ captures the correlations between them. 
Note that for the extreme case that $\Phimat^{(t)}=\Imat_{K_{t}}$ for $t\ge 2$ are all identity matrices, which indicates that there are no correlations between the features of $\thetav_j^{(t-1)}$ left to be captured, % by  $\Phimat^{(t)}$, 
the deep structure could still provide benefits as it helps % model the high variability of $\thetav_j^{(1)}$ and the corresponding 
model latent counts $\mv_{j}^{(1)(2)}$ that may be highly overdispersed.  For example, supposing $\Phimat^{(t)}=\Imat_{K_{2}}$ for all $t\ge2$, then from (\ref{eq:PGBN}) and (\ref{eq:NBAug}) we have 
\beq
m_{kj}^{(1)(2)}\sim\mbox{NB}(\theta_{kj}^{(2)},p_j^{(2)}), ~\ldots,~\theta_{kj}^{(t)}\sim\mbox{Gam}(\theta_{kj}^{(t+1)},1/c_j^{(t+1)}),~\ldots,~\theta_{kj}^{(T)}\sim\mbox{Gam}(r_k,1/c_j^{(T+1)}).\notag
\eeq
For simplicity, let us further assume $c_j^{(t)} = 1$ for all $t\ge 3$. Using the laws of total expectation and  total  variance,  we have 
$
\E\big[\theta_{kj}^{(2)}\,|\,r_k\big] = r_k$ and $
\mbox{Var}\big[\theta_{kj}^{(2)}\,|\,r_k\big] = (T-1) r_k,
$
and hence
$$
\E\big[m_{kj}^{(1)(2)}\,|\,r_k\big] = {r_k  p_j^{(2)}}/{(1-p_j^{(2)})},~~ 
\mbox{Var}\big[m_{kj}^{(1)(2)}\,|\,r_k\big] = {r_k  p_j^{(2)}}{\big(1-p_j^{(2)}\big)^{-2}}  \left[1+ (T-1)p_j^{(2)}\right]. % c^{-T}\sum_{t=1}^T c^{-t}$$
$$
In comparison to   PFA with $m_{kj}^{(1)(2)}\,|\,r_k\sim\mbox{NB}(r_k,p_j^{(2)})$, %  given $r_k$,   % assumes uses %the negative binomial distribution to model $m_{kj}^{(1)(2)}$, which 
with a variance-to-mean ratio of $1/(1- p_j^{(2)})$, 
the PGBN with $T$ hidden layers, which mixes the shape of $m_{kj}^{(1)(2)}\sim\mbox{NB}(\theta_{kj}^{(2)},p_j^{(2)})$ with a chain of gamma random variables, 
increases  the variance-to-mean ratio of the latent count $m_{kj}^{(1)(2)}$ given $r_k$  by a factor of $1+ (T-1)p_j^{(2)}$, and hence could better model highly overdispersed counts. 

\subsection{Upward-downward Gibbs sampling}\label{sec:sampling}
\vspace{-1.5mm}
%Throughout the paper, we use the notation $x_{ij\cdotv}$ to represent summing over the corresponding index symbol to indicate summing over the corresponding index
With Lemma \ref{lem:PGBN} and Corollary \ref{cor:PGBN} and the width of the first layer being bounded by $K_{1\max}$, %data augmentation and marginalization techniques unique to the negative binomial distributions, 
we develop an upward-downward Gibbs sampler for the PGBN, each iteration of which proceeds as follows: %consists of (\ref{eq:z}),   (\ref{eq:CRT}), (\ref{eq:step1}), and (\ref{eq:step2})-(\ref{eq:step6}), as described  below:\\ % to sample all latent parameters. Listed below are key update equations.\\ 
\emph{\textbf{Sample $x_{vjk}^{(t)}$}}. %For the hidden layers with $t\ge 2$, 
We can sample $x_{vjk}^{(t)}$ for all layers
using (\ref{eq:step1}). But for the first hidden layer, we may treat each observed count $x_{vj}^{(1)}$ as a sequence of %$x_{vj}^{(1)}$ 
word tokens at the $v$th term (in a vocabulary of size $V:=K_0$) in the $j$th document, and assign the $x_{\cdotv j}^{(1)}$ words $\{v_{ji}\}_{i=1,x_{\cdotv j}^{(1)}}$ one after another to the latent factors (topics),  with both the topics $\Phimat^{(1)}$ and topic weights $\thetav_j^{(1)}$ marginalized out, as
\beqs
&P(z_{ji}=k\,|\,-) %\xv, \zv^{-ji},\Phimat^{(2)},\thetav_j^{(2)}) 
\propto
%\begin{cases} \vspace{0.15cm} 
\frac{\eta^{(1)}+x_{v_{ji}\cdotv k}^{(1)^{-ji}}}{V\eta^{(1)}+ x_{\cdotv \cdotv k}^{(1)^{-ji}}} \left(x_{\cdotv j k}^{(1)^{-ji}}+\phiv^{(2)}_{k:} \thetav_j^{(2)}\right),~~~k\in\{1,\ldots,K_{1\max}\}, \label{eq:z}
%, & \mbox{for  }  k=1,\ldots,K_J^{-ji} ; \\
%\frac{1}{ V}\cdotv (\alpha \tilde{r}_{*}), %\frac{e_0+K_J^{-ji}}{f_0+\psi(c+r_{\cdotv} ) - \psi(c)}\frac{r_j}{c+ r_{\cdotv}}, 
%& \mbox{if  }  k=K_J^{-ji}+1;
%\end{cases}
\eeqs
where $z_{ji}$ is the topic index for  $v_{ji}$ and $x_{vjk}^{(1)} := \sum_{i}\delta(v_{ji}=v,z_{ji}=k)$ counts the number of times that term $v$ appears in document~$j$; we use the $\cdotv$ symbol to represent summing over the corresponding index, \emph{e.g.},  $x^{(t)}_{\cdotv j k}:=\sum_v x^{(t)}_{vjk} $, and use $x^{-ji}$ to denote the count $x$ calculated without considering  word $i$ in  document $j$. The collapsed Gibbs sampling update equation shown above is related to the one developed in \cite{FindSciTopic}  for latent Dirichlet allocation, and the one developed in \cite{BNBP_EPPF} for PFA using the beta-negative binomial process. When $T=1$, %for $k\in \{k:x^{(1)}_{\cdotv\cdotv k}>0\}$, 
we would replace the terms $\phiv^{(2)}_{k:} \thetav_j^{(2)}$ with $r_k$ for PFA built on the gamma-negative binomial process \cite{NBP2012} (or with $\alpha \pi_k$ for the hierarchical Dirichlet process latent Dirichlet allocation, see \cite{HDP} and \cite{BNBP_EPPF} for details), %where $\alpha$ is the concentration parameter and $\pi_k$ are the probability weights for the globally shared Dirichlet process), 
and add an additional term to account for the possibility of creating an additional topic \cite{BNBP_EPPF}. For simplicity, in this paper, we  truncate the nonparametric Bayesian model with $K_{1\max}$ factors and let $r_k\sim\mbox{Gam}(\gamma_0/K_{1\max},1/c_0)$ if $T=1$.     \\
\emph{\textbf{Sample $\phiv_k^{(t)}$}}. Given these latent counts, %with $x^{(t)}_{v\cdotv k}:=\sum_{j=1}^Jx^{(t)}_{vjk}$, 
we sample the factors/topics $\phiv^{(t)}_k$ as
\beq
(\phiv^{(t)}_k\,|\,-)\sim\mbox{Dir}\left( \eta^{(t)}+ x^{(t)}_{1\cdotv k},\ldots, \eta^{(t)}+ x^{(t)}_{K_{t-1} \cdotv k} \right).\label{eq:step2}
\eeq
\emph{\textbf{Sample $x_{vj}^{(t+1)}$}}. We sample $\xv_j^{(t+1)}$ using (\ref{eq:CRT}), 
%where %the term 
replacing  $\Phimat^{(T+1)}\thetav_j^{(T+1)}$ %is replaced 
 with $\rv:=(r_1,\ldots,r_{K_{T}})'$.\\
\emph{\textbf{Sample $\thetav_j^{(t)}$}}. 
Using (\ref{eq:deepPFA_aug}) and the gamma-Poisson conjugacy, we sample $\thetav_j$ as
\beqs
&(\thetav_j^{(t)}\,|\,-)\sim\mbox{Gamma}\Big(\Phimat^{(t+1)}\thetav_j^{(t+1)} + \mv_j^{(t)(t+1)},\left[{c_j^{(t+1)}-\ln\left(1-p_j^{(t)}\right)}\right]^{-1}\Big). \label{eq:step4}
\eeqs
%where we replace the term $\Phimat^{(T+1)}\thetav_j^{(T+1)}$  with $\rv:=(r_1,\ldots,r_{K_{T}})'$.\\
%Now we have listed all key Gibbs sampling update equations. To avoid the need of parameter tuning, one may sample the other model parameters, as described below.\\ % as follows:\\
\emph{\textbf{Sample $\rv$}}. Both $\gamma_0$ and $c_0$ are sampled using related equations in \cite{NBP2012}. We sample $\rv$ as % and its hyper-parameters as
\beqs
&(r_v\,|\,-)\sim\mbox{Gam}\Big({\gamma_0}/K_T+x_{v\cdotv}^{(T+1)},\left[{c_0-\textstyle \sum_j\ln\big(1-p_j^{(T+1)}\big)}\right]^{-1}\Big).  \label{eq:step5}
\eeqs
%where $x_{v\cdotv}^{(T+1)}:=\sum_jx_{vj}^{(T+1)}$, $l_{\cdotv}=\sum_v l_v$, and $p' := \big[-\sum_j \ln(1-p_j)\big]\big/\big[c_0-\sum_j \ln(1-p_j)\big]$.\\
\emph{\textbf{Sample $c_j^{(t)}$}}.  With $\theta_{\cdotv j}^{(t)}:=\sum_{k=1}^{K_{t}}\theta_{kj}^{(t)}$ for $t\le T$ and $\theta_{\cdotv j}^{(T+1)}:=r_{\cdotv}$, we sample $p_j^{(2)}$ and  $\{c_j^{(t)}\}_{t\ge 3}$  as
\begin{align}
&(p_j^{(2)}\,|\,-)\sim\mbox{Beta}\left(a_0\!+\! m_{\cdotv j}^{(1)(2)},b_0\!+\!\theta_{\cdotv j}^{(2)}\!\right),~(c_j^{(t)}\,|\,-)\sim\mbox{Gamma}\Big(e_0 \!+\! \theta_{\cdotv j}^{(t)}, \left[{f_0\!+\!\theta_{\cdotv j}^{(t-1)}}\!\right]^{-1}\!\Big),  \label{eq:step6}
\end{align}
and calculate $c_j^{(2)}$ and $\{p_j^{(t)}\}_{t\ge3}$ with (\ref{eq:p}).

\setlength{\textfloatsep}{7pt}
\begin{algorithm}[t]

\footnotesize
  \caption{\footnotesize The PGBN upward-downward Gibbs sampler that uses a layer-wise training strategy to train a set of networks, each of which adds an additional hidden layer on top of the  previously  inferred network, retrains all its layers jointly, and prunes inactive factors from the last layer. % at the end of  training. %(\emph{italicized steps are optional}).  %using joint training.  % whose depths are increased one at a time. %with an increasing depth. 
  % jointly train the networks with increasing depths using combines the jointly training of all hidden layers with a layer-wise training strategy. % All the layers of an inferred network are  jointly trained. % training. would final network of $T$ %that jointly train a network of $T$ layers in a layer-wise training step before the joint training of all layers.
   \textbf{Inputs:} observed counts $\{x_{vj}\}_{v,j}$, upper bound of the width of the first layer $K_{1\max}$, upper bound of the number of layers $T_{\max}$, and hyper-parameters.  
    \textbf{Outputs:} A total of $T_{\max}$ jointly trained  PGBNs with depths $T=1$, $T=2$, $\ldots$, and $T=T_{\max}$.
  }\label{tab:algorithm}
  \begin{algorithmic}[1] 
   %\State \text{}
   %\State \text{Initialize  model parameters} % at random}
   \For{\text{$T=1,2,\ldots,T_{\max}$}} Jointly train all the $T$ layers of the network
   %\For{\text{$Iter=1:PreTrainIterations$}} 
   \State
   Set $K_{T-1}$, the inferred width of layer $T-1$, as $K_{T\max}$, the upper bound of layer $T$'s width. % if $T>1$.
   %\State
   %Let $r_k\sim\mbox{Gam}(\gamma_0/K_{T\max},1/c_0)$.
   %\emph{Pre-train layer $T$ with the lower $T-1$ layers fixed.}
   %\EndFor
    \For{\text{$iter=1: B_T+C_T$ %Burnin_T+Collection_T$
    }} Upward-downward Gibbs sampling
   % \Loop{ Gibbs sampling} % iteration}
   % \For{\texttt{$iter=1:maxIter$}}
 %  \State
     \State 
        \text{Sample $\{z_{ji}\}_{j,i}$ using collapsed inference;  Calculate $\{x_{vjk}^{(1)}\}_{v,k,j}$};
         \text{Sample $\{x_{vj}^{(2)}\}_{v,j}$} ;
      \For{\text{$t=2,3,\ldots,T$}}
        \State 
        \text{Sample $\{x_{vjk}^{(t)}\}_{v,j,k}$} ; % and calculate $\{m_{vj}^{(t)(t+1)}\}_{v,j}$;
          %\State 
        % \emph{\text{Sample $\etav^{(t)}$  and its hyper-parameters}}; 
       %  \State
         \text{  Sample $\{\phiv_k^{(t)}\}_{k}$} ;
      %  \State 
        \text{  Sample $\{x_{vj}^{(t+1)}\}_{v,j}$} ;
      \EndFor
      \State
        \text{Sample $p_j^{(2)}$ and Calculate $c_j^{(2)}$}; Sample $\{c^{(t)}_j\}_{j,t}$  and Calculate $\{p^{(t)}_j\}_{j,t}$ for $t=3,\ldots,T+1$
        \For{\texttt{$t=T,T-1,\ldots,2$}}
      \State 
      %Sample $p_j^{(t)}$ and calculate $c_j^{(t)}$ if $t=2$ and 
      %Sample $c_j^{(t+1)}$ and calculate $p_j^{(t+1)}$;
       \text{Sample $\rv$} if $t=T$;
        \text{Sample $\{\thetav_j^{(t)}\}_j$} ;
      \EndFor
     
%      %\State $r\gets a\bmod b$
%      \State 
%        \text{Sample $\{z_{ji}\}_{j,i}$ using collapsed inference;  Calculate $\{x_{vjk}^{(1)}\}_{v,k,j}$};
%         \text{Sample $\{x_{vj}^{(2)}\}_{v,j}$} ;
%      \For{\text{$t=2,3,\ldots,T$}}
%        \State 
%        \text{Sample $\{x_{vjk}^{(t)}\}_{v,j,k}$} ; % and calculate $\{m_{vj}^{(t)(t+1)}\}_{v,j}$;
%          %\State 
%        % \emph{\text{Sample $\etav^{(t)}$  and its hyper-parameters}}; 
%       %  \State
%         \text{  Sample $\{\phiv_k^{(t)}\}_{k}$} ;
%      %  \State 
%        \text{  Sample $\{x_{vj}^{(t+1)}\}_{v,j}$} ;
%      \EndFor
%       
%        \For{\texttt{$t=T,T-1,\ldots,2$}}
%      \State 
%      %Sample $p_j^{(t)}$ and calculate $c_j^{(t)}$ if $t=2$ and 
%      Sample $c_j^{(t+1)}$ and calculate $p_j^{(t+1)}$; \text{Sample $\rv$} if $t=T$;
%        \text{Sample $\{\thetav_j^{(t)}\}_j$} ;
%      \EndFor
%
%        \State
%         %{
%         \text{Sample $p_j^{(2)}$ and Calculate $c_j^{(2)}$}; %Sample $\{c^{(t)}_j\}_{j,t}$ and Calculate $\{p^{(t)}_j\}_{j,t}$ for $t\ge 3$}
%         %}  ;
                       % and all the other model parameters} ;%\Comment{The gcd is b}
      %\EndLoop
      \If{ $iter=B_T$}
      \State 
      \!Prune layer $T$'s inactive factors $\{\phiv_{k}^{(T)}\}_{k:x_{\cdotv \cdotv k}^{(T)}=0}$, let $K_T=\sum_k {\delta(x_{\cdotv \cdotv k}^{(T)}>0)}$, and %and the corresponding elements in $\rv$, 
        update $\rv$; % and $K_T$; % correspondingly;
%        \State \!\emph{Prune all inactive factors $\{\phiv_{k}^{(t)}\}_{k:x_{\cdotv \cdotv k}^{(t)}=0}$ and update  $K_t$ for all $t\in\{1,2,\ldots,T-1\}$};

      \EndIf
    \EndFor

    \State
    Output the posterior means (according to the last MCMC sample) of  all remaining  factors 
    $\{\phiv_k^{(t)}\}_{k,t}$
   % $\left\{\{\phiv_k^{(t)}\}_{k:x_{\cdotv \cdotv k}^{(t)}>0}\right\}_{t=1,T}$ %for $t\in\{1,2,\ldots,T\}$ and\
    as the inferred network of $T$ layers, and $\{r_k\}_{k=1}^{K_T}$ as % those of $\{r_k\}_{k:x_{\cdotv \cdotv k}^{(T)}>0}$ as 
    the gamma shape parameters of layer $T$'s %the gamma distributed
    hidden units. %, for out-of-sample prediction.
     \EndFor
    %\EndFor
   % \vspace{-4mm}
  \end{algorithmic}
  \normalsize
\end{algorithm}%
%}

%and we sample $c_j^{(T+1)}$ as $
%(c_j^{(T+1)}\,|\,-)\sim\mbox{Gamma}\left(e_0 + r_{\cdotv}, \left[{f_0+\theta_{\cdotv j}^{(T)}}\right]^{-1}\right)$.
\vspace{-1mm}
\subsection{Learning the network structure with layer-wise training}
\vspace{-1mm}
As jointly training all layers together is often difficult, existing deep networks are typically trained %one layer at a time 
using a greedy layer-wise unsupervised training algorithm,  such as the one proposed in \cite{hinton2006fast} to train the deep belief networks.  The effectiveness of this training strategy is further analyzed in \cite{bengio2007greedy}. % which also shows that replacing the 
%RBMs  the DBNs  with auto-encoders as the building blocks for deep networks could yield similar results. 
%However, this greedy layer-wise training strategy, despite being effective as an initial  training step for a supervised learning algorithm, is at most a suboptimal solution for a deep network that could be better learned by  jointly training all the  layers together. 
%Distinct from existing deep networks, % that often require greedy layer-wise training, 
By contrast, the PGBN has a simple %  upward-downward 
Gibbs sampler to  jointly train all its hidden layers, as described in Section \ref{sec:sampling}, and 
hence does not require greedy layer-wise training. Yet the same as commonly used deep learning algorithms, it still needs to specify the number of layers and the width of each layer. 

In this paper, we adopt the idea of  layer-wise training for the PGBN, not because of the lack of an effective joint-training algorithm, but for the purpose of learning the width of each hidden layer in a greedy layer-wise manner, given a fixed budget on the %the upper bound imposed on 
%the 
width of the first layer. The proposed layer-wise training strategy is summarized in Algorithm~\ref{tab:algorithm}. With a  PGBN of $T-1$ layers that has already been trained, the key idea  is to use a truncated gamma-negative binomial process \cite{NBP2012} to model the latent count matrix for the newly added top layer as $m_{kj}^{(T)(T+1)}\sim\mbox{NB}(r_k,p_j^{(T+1)}),~ r_k\sim\mbox{Gam}(\gamma_0/K_{T\max},1/c_0)$, and rely on that stochastic process's shrinkage mechanism %of the gamma-negative binomial process
 to prune inactive factors (connection weight vectors) of layer $T$, and hence the inferred $K_T$ would be smaller than $K_{T\max}$ if $K_{T\max}$ is sufficiently large. The newly added layer and the layers below it  would be jointly trained, but with %of the previously trained network of $T-1$ layers 
the structure  below the newly added layer kept unchanged. % $T$. %from previous layer-wise training are the  widths of the layers below $T$. % (these widths could also possibly become smaller if the optional pruning step of inactive factors in the layers below $T$ is exercised). 
 Note that when $T=1$, the PGBN would infer the number of active factors if $K_{1\max}$ is set large enough, otherwise, it would still assign the factors with different weights $r_k$, but may not be able to prune any of them.

\vspace{-3.5mm}
\section{Experimental Results}
\vspace{-2.5mm}
%We represent each document is represented as a high-dimensional count vector, each element of which counts the number of time a term of the vocabulary appears in the document. 

We apply the PGBNs for topic modeling of text corpora, each document of which % has a bag-of-words representation, summarized as
is represented as a term-frequency %high-dimensional 
count vector. % that has the same dimension as the vocabulary size. 
Note that the PGBN with a single hidden layer is identical to the (truncated) gamma-negative binomial process PFA of  \cite{NBP2012}, which is a nonparametric Bayesian algorithm that performs similarly  to the hierarchical Dirichlet process latent Dirichlet allocation \cite{HDP} for text analysis, and is considered as a strong baseline that outperforms a large number of topic modeling algorithms. Thus we will focus on making comparison to the PGBN with a single layer, with its layer width set to be large to approximate the performance of the gamma-negative binomial process PFA.
We evaluate the PGBNs' performance by examining both how well they unsupervisedly extract low-dimensional features for document classification, and how well they predict heldout word tokens. Matlab code will be available  in \href{http://mingyuanzhou.github.io/}{http://mingyuanzhou.github.io/}.
%, measured by  per-holdout-word perplexity. 

 %enough such that not all its hidden units will be used.  

We use Algorithm \ref{tab:algorithm} to learn,  in a layer-wise manner,  from the training data  the  weight matrices $\Phimat^{(1)},\ldots,\Phimat^{(T_{\max})}$ and  the top-layer hidden units' gamma shape parameters $\rv$: %In this layer-wise training algorithm,  
to add layer $T$ to a previously trained network with $T-1$ layers, we use $B_T$ %Gibbs sampling
 iterations to jointly train $\Phimat^{(T)}$ and $\rv$ together with $\{\Phimat^{(t)}\}_{1,T-1}$, prune the inactive factors of layer $T$, and continue the joint training with another $C_T$ iterations.  %  that were collected at later times.    
We set the hyper-parameters as $a_0=b_0=0.01$ and $e_0=f_0=1$. %, and $\eta^{(t)}=0.05$ for all $t$. 
%For feature extraction, 
%We choose $MaxIter_t=1000$ for all $t$ to estimate $\Phimat^{1},\ldots,\Phimat^{(T)}$ and the top hidden layer hidden units's gamma shape parameters $\rv$, with which 
Given the trained network, we apply the upward-downward Gibbs sampler to collect 500 MCMC  samples after 500 burnins to estimate the posterior mean of the feature usage proportion vector $\thetav_j^{(1)}/\theta_{\cdotv j}^{(1)}$ at the first hidden layer, for every document in both the training and testing sets.

\begin{figure}[!tb]
\begin{center}
\includegraphics[width=56mm]{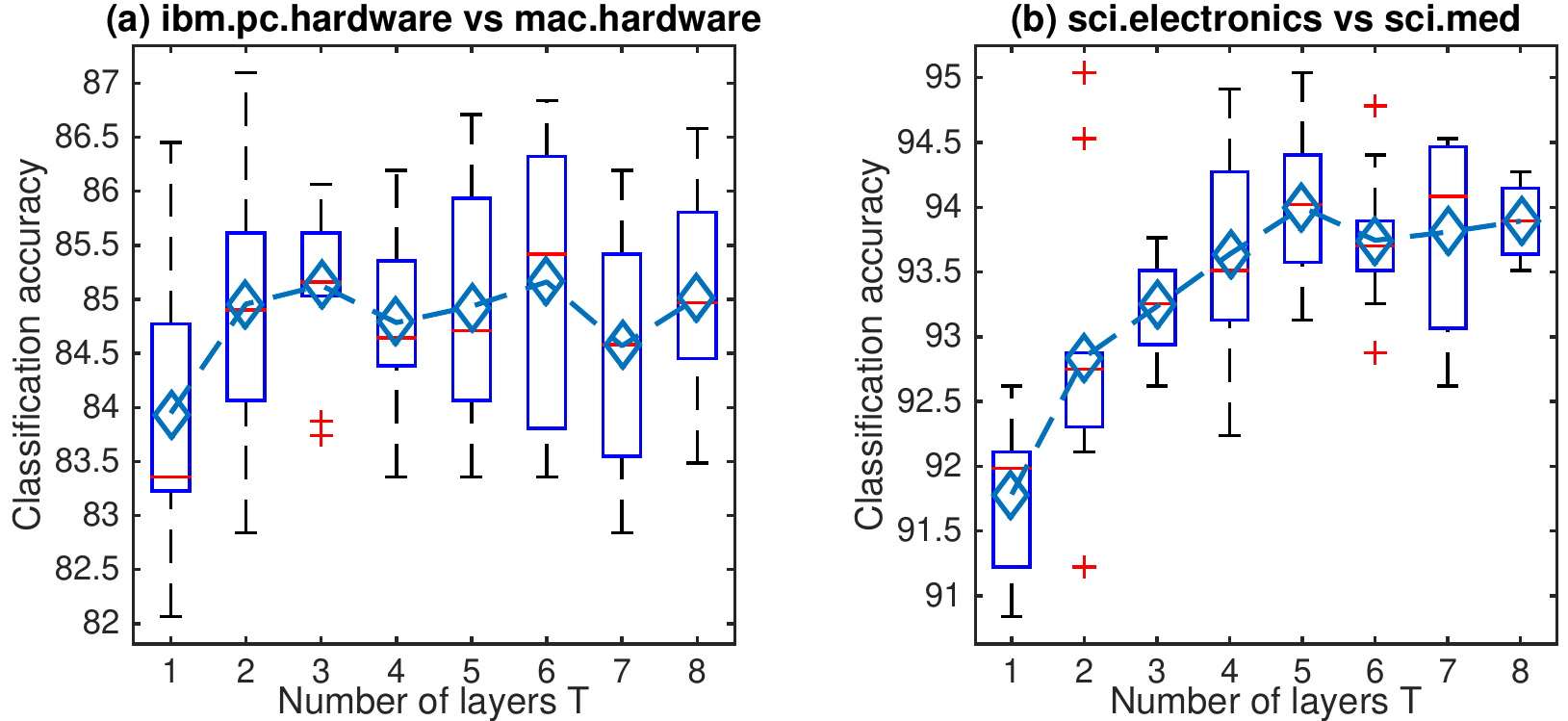}\, \, \,
\includegraphics[width=56mm]{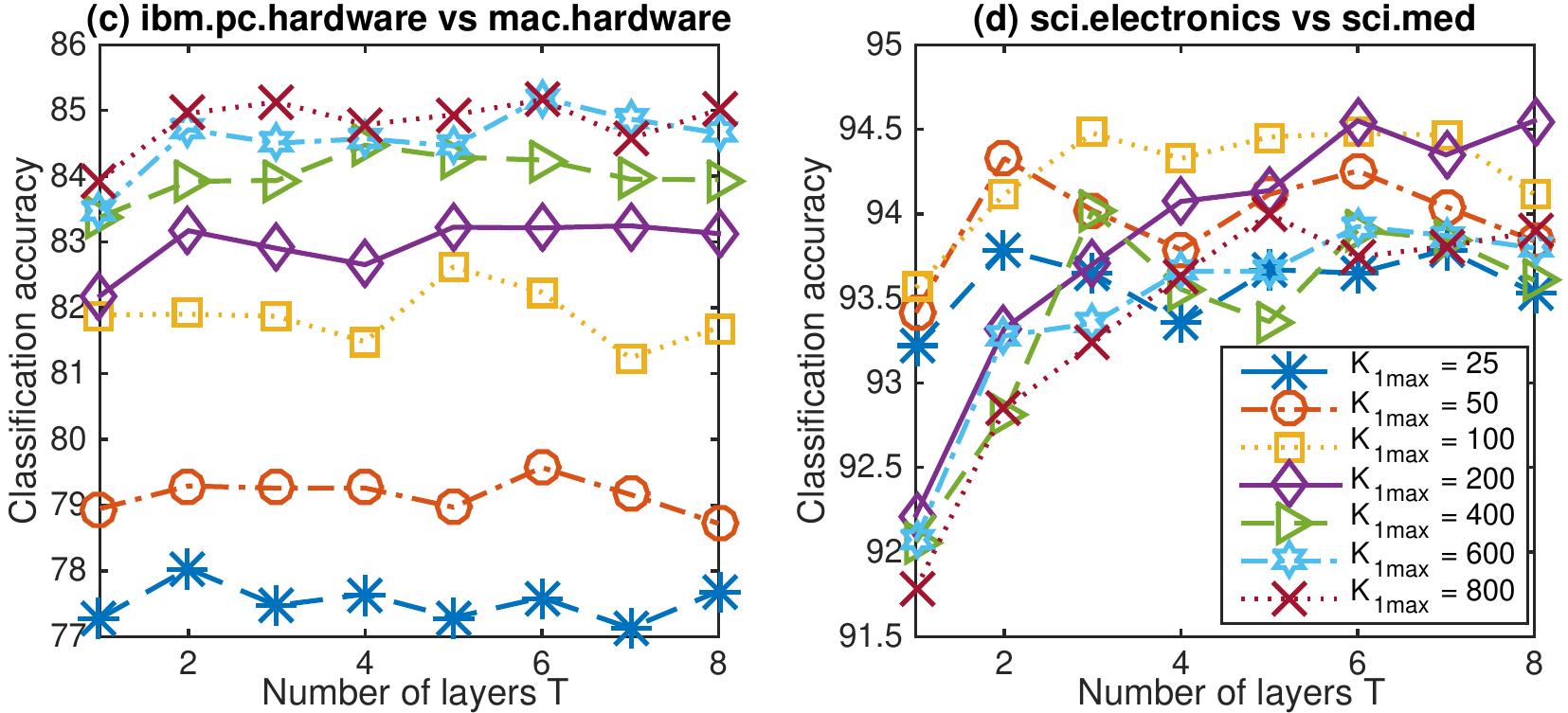}
\end{center}
\vspace{-5.5mm}
\caption{\small \label{fig:binary_20news}
Classification accuracy (\%) as a function of the network depth $T$ for two  20newsgroups binary classification tasks, %The widths of hidden layers are automatically inferred, 
with $\eta^{(t)} =0.01$ for all layers. % and the upper bound of the first-layer width  set as $K_{1\max}=50$, $100$, $200$, $400$, $600$, or $800$.  
(a)-(b): the boxplots of the accuracies of 12 independent runs with $K_{1\max}=800$. (c)-(d): the average accuracies of these 12 runs for various $K_{1\max}$ and $T$.
Note that $K_{1\max}=800$ is large enough to cover all active first-layer topics (inferred to be around 500 for both binary classification tasks), whereas all the first-layer topics would be used if $K_{1\max}=25, 50, 100$, or $200$.
 \vspace{0.mm}
}
%\end{figure}

%\begin{figure}[!tb]
\begin{center}
\includegraphics[width=49mm]{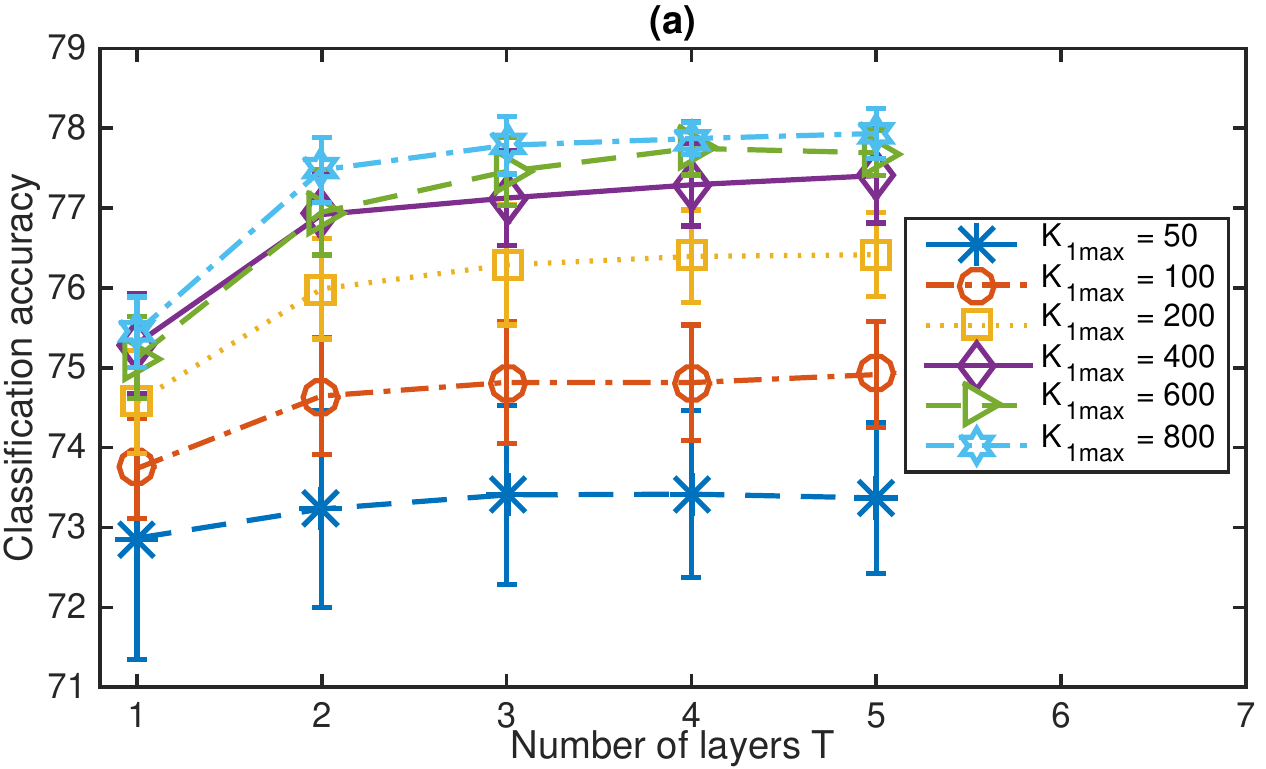}
\,\,
\includegraphics[width=49mm]{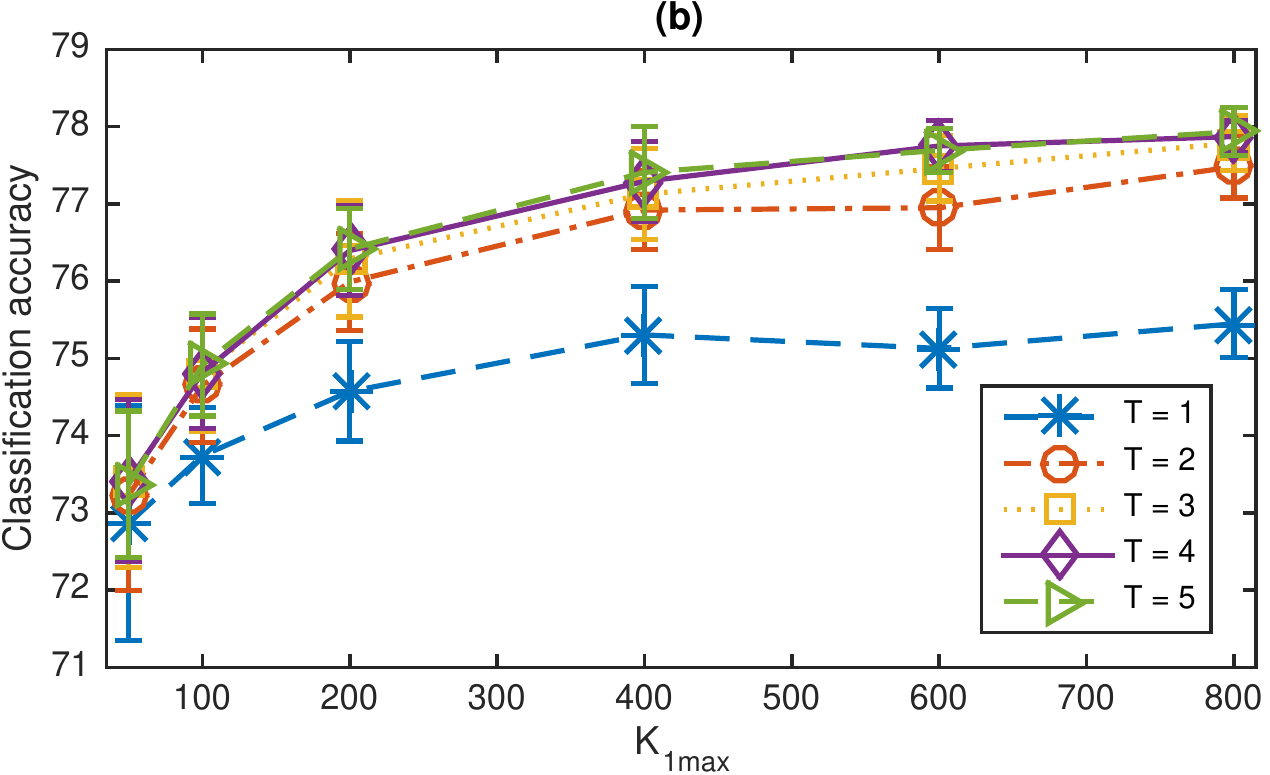}
\end{center}
\vspace{-5.3mm}
\caption{\small \label{fig:full_20news}
Classification accuracy (\%) of the PGBNs for 20newsgroups multi-class classification (a) as a function of the depth $T$ with various $K_{1\max}$ and (b)  as a function of $K_{1\max}$ with various depths, % ($T_{\max}\in\{1,\ldots,5\}$), 
with $\eta^{(t)}=0.05$ for all layers. %and the upper bound of the first-layer width 
The widths of hidden layers are automatically inferred, with $K_{1\max}=50$, $100$, $200$, $400$, $600$, or $800$.  
%(a)-(b) are the boxplots of the accuracies of eight independent runs with $K_{1\max}=200$. (c)-(d) show the average accuracy of these eight runs for different $K_{1\max}$ and $T$.
Note that %both $K_{1\max}=600$ and 
$K_{1\max}=800$  is large enough to cover all active first-layer topics, whereas all the first-layer topics would be used if $K_{1\max}=50$, $100$, or $200$.
 \vspace{0.mm}
}
\end{figure}

\textbf{Feature learning for binary classification.}
We consider the 20 newsgroups dataset ({\href{http://qwone.com/~jason/20Newsgroups/}{http://qwone.com/$\sim$jason/20Newsgroups/}}) that consists of 18,774 documents from 20 different news groups, with a vocabulary of size $K_0= $ 61,188. It is partitioned into a training set of 11,269 documents and a testing set of 7,505 ones. 
We first consider %there 
two binary classification tasks that distinguish between 
%the $alt.atheism$ and $talk.religion.misc$, %$talk talk.politics.guns$ and $talk.politics.mideast$, 
the $comp.sys.ibm.pc.hardware$ and  $comp.sys.mac.hardware$, and between the
$sci.electronics$ and $sci.med$ news groups.  
For each binary classification task, we remove  a standard list of stop words and only consider the terms that appear at least five times, % in the corpus consisting of both newsgroups, 
and report the classification accuracies based on 12 independent random trials. % with random initializations.  
With the upper bound of the first layer's width set as $K_{1\max}\in\{25,50,100, 200, 400, 600, 800\}$, and $B_t=C_t=1000$ and $\eta^{(t)}=0.01$  
for all $t$, we use Algorithm \ref{tab:algorithm} to train a network with $T\in\{1, 2, \ldots,8\}$ layers. Denote~$\bar{\thetav}_j$ as the estimated $K_1$ dimensional feature vector for document $j$, where $K_1\le K_{1\max}$ is the inferred number of active factors of the first layer that is bounded by the pre-specified truncation level $K_{1\max}$. We use the $L_2$ regularized logistic regression  provided by the LIBLINEAR package \cite{REF08a} to train a linear classifier on $\bar{\thetav}_j$ in the training set and use it to classify $\bar{\thetav}_j$ in the test set, where the regularization parameter is five-folder %, with its optimal value
 cross-validated on the training set from $(2^{-10}, 2^{-9},\ldots, 2^{15})$. 

As shown in Fig.~\ref{fig:binary_20news}, modifying the PGBN from a single-layer shallow network to a multilayer deep one clearly improves the qualities of the unsupervisedly extracted %$K_1$ dimensional 
feature vectors. % for linear classification. 
In a random trial, with $K_{1\max}=800$, we infer a network structure of  $(K_1,\ldots,K_8)=(512,   154,    75,    54 ,   47  ,  37,    34 ,   29)$ for the first binary classification task, and $(K_1,\ldots,K_8)=(491, 143,    74,    49,    36,    32,    28 ,   26)$ for the second one. 
 Figs.~\ref{fig:binary_20news}(c)-(d) also show that  increasing the network depth  in general improves the performance, but the first-layer width clearly plays an important role in controlling the ultimate network capacity. This insight is further illustrated below. %with additional larger-scale experiments described below. 

%In addition to these  binary classification tasks,
\textbf{Feature learning for multi-class classification.} We test the PGBNs for multi-class classification on  20newsgroups. After removing a standard list of  stopwords and the terms that appear less than five times, we obtain a vocabulary with $K_0= 33,420$. We set %the topic Dirichlet smoothing parameter as 
 $C_t=500$ and $\eta^{(t)} = 0.05$ for all $t$. If $K_{1\max}\le 400$, we set $B_t=1000$ for all $t$, otherwise we  set $B_1=1000$ and $B_t=500$ for $t\ge 2$. We use all 11,269 training documents to infer a set of networks with  $T_{\max}\in\{1,\ldots,5\}$ and $K_{1\max}\in \{50, 100, 200, 400, 600,800\}$, and mimic the same testing procedure used for binary classification  to extract low-dimensional  feature vectors, with which each testing document is classified to one of the 20 news groups using the $L_2$ regularized  logistic regression. 
  Fig. \ref{fig:full_20news} shows a clear trend of improvement in classification accuracy by increasing the network depth with a limited first-layer width, or by increasing the upper bound of the width of the first layer with the depth fixed. For example, a single-layer PGBN with $K_{1\max}=100$ could add one or more layers to slightly outperform a single-layer PGBN with $K_{1\max}=200$, and a single-layer PGBN with $K_{1\max} =200$ could add layers to clearly outperform a single-layer PGBN with $K_{1\max}$ as large as $800$. We also note that %a nice property of the model is that 
each iteration of  jointly training multiple layers costs moderately more than that of training a single layer, % given the same total number of iterations.
   e.g., with $K_{1\max}=400$, a training iteration on a single core of an Intel Xeon 2.7 GHz CPU on average takes about $5.6$, 
$6.7$, $7.1$ seconds for the PGBN with $1$, $3$, and $5$ layers, respectively.

 Examining the inferred network structure also reveals interesting details. For example, in a random trial with Algorithm 1, 
the inferred network widths $(K_1,\ldots,K_5)$ are 
$( 50  ,  50   , 50   , 50,    50)$, 
%$(100,    99,    99  ,  94  ,  87)$,
$(200,   161  , 130  ,  94,    63)$,
%$(396 ,  109  ,  99    ,82 ,   68)$,
$(528 ,  129   ,109 ,   98,    91)$, and
$(608,   100,    99,    96,    89)$,
 for  $K_{1\max}=50,200,600$, and $800$, respectively. 
This indicates  that for a network with an insufficient budget on its first-layer width, as the network depth increases, its inferred layer widths decay %much
 more slowly than a network with a sufficient or surplus budget on its first-layer width; and a network with a surplus budget on its first-layer width may only need relatively small widths for its higher hidden layers. In the Appendix, we provide comparisons of %classification 
 accuracies  between the PGBN and other related algorithms, including these of \cite{larochelle2012neural} and  \cite{srivastava2013modeling}, on similar multi-class document classification tasks.

\begin{figure}[!tb]
\begin{center}
\includegraphics[width=108mm]{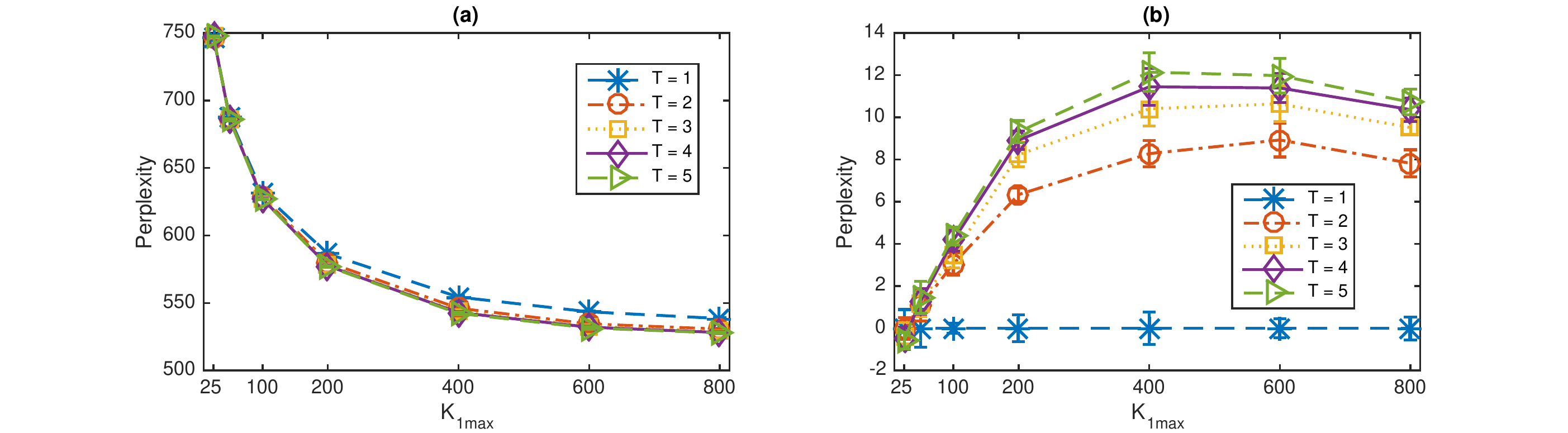}
\end{center}
\vspace{-5.3mm}
\caption{\small \label{fig:Perplexity}
(a)  per-heldout-word perplexity (the lower the better) for the NIPS12 corpus (using the 2000 most frequent terms) as a function of the upper bound of the first layer width $K_{1\max}$ and network depth $T$, %performance varies as a function of $K_{1\max}$ and $T_{\max}$. 
with $30\%$ of the word tokens in each document used for training and $\eta^{(t)}=0.05$ for all $t$. 
(b) for visualization, each curve in (a) is reproduced by subtracting its values from the average perplexity of the single-layer network. % (T=1). % with $T=1$ layer.
%each curve in the right plot  is obtained by subtracting the corresponding curve on the left plot  from the average accuracy of the single-layer network with $T=1$ layer.
\vspace{.5mm}
}
\end{figure}

%\{2^C\}_{C=-20,-19,\ldots,20}$;

\textbf{Perplexities for holdout words.}
In addition to examining the performance of the PGBN for unsupervised feature learning, we also consider a more direct approach that we randomly choose 30\% of the word tokens in each document as training, and use the remaining ones to calculate per-heldout-word  perplexity. We consider %all the 18,774 documents of  
%the 20 newsgroups,
the NIPS12 (\href{http://www.cs.nyu.edu/~roweis/data.html}{http://www.cs.nyu.edu/$\sim$roweis/data.html})  corpus,
 limiting the vocabulary to the 2000 most frequent terms.
 % after stopwords removal, 
We set $\eta^{(t)}=0.05$ and $C_t=500$ for all $t$, 
set $B_1=1000$ and $B_t=500$ for $t\ge2$, and consider five random trials.  
 Among the $B_t+C_t$ Gibbs sampling iterations used to train layer $t$, we collect one sample per  five iterations during the last $500$ iterations, for each of which we draw the topics $\{\phiv^{(1)}_k\}_k$ and  topics weights  $\thetav_{j}^{(1)}$, to compute the per-heldout-word perplexity using Equation (34) of \cite{NBP2012}. As shown in Fig. \ref{fig:Perplexity}, we observe a clear trend of improvement by increasing both $K_{1\max}$ and $T$.

\textbf{Qualitative analysis and document simulation.} In addition to these quantitative experiments,  we have also examined the topics learned at each layer. We use $\big(\prod_{\ell=1}^{t-1} \Phimat^{(\ell)}\big) \phiv_{k}^{(t)}$ to project topic $k$ of layer $t$ as a $V$-dimensional word probability vector. Generally speaking, the topics at lower layers are more specific, whereas those at higher layers are more general. 
E.g., examining the results used to produce Fig. \ref{fig:Perplexity}, 
with $K_{1\max}=200$ and $T=5$, the PGBN infers a network with $(K_1,\ldots,K_5)=$ $
(200, 164 ,106, 60 ,42)$. 
%(755,    57,    53  ,  46,    42)$.
%(73, 38,30,23,23)$. 
The ranks (by popularity) and top five words of  three example topics for layer $T=5$ are
``6 network units input learning training,''
``15 data model learning set image,'' and
``34 network learning model input neural;''
while these of five example topics of layer $T=1$ are 
``19 likelihood em mixture parameters data,''
``37 bayesian posterior prior log evidence,''
``62 variables belief networks conditional inference,''
``126 boltzmann binary machine energy hinton,''
and
``127 speech speaker acoustic vowel phonetic.''
%``function data algorithm set    model    learning,'' ``network    input    neural    networks    output    units,'' and ``model   visual    cells    response    figure    motion'' for the bottom layer.
We have also tried drawing $\thetav^{(T)}\sim\mbox{Gam}\big(\rv,1/c_j^{(T+1)}\big)$ and downward passing it through the $T$-layer network  to generate synthetic documents, which are found to be quite interpretable and reflect various general aspects of the corpus used to train the network. 
We provide in the Appendix a number of synthetic documents generated from a  PGBN trained on the %training set of the 
20newsgroups corpus, whose inferred structure is $(K_1,\ldots,K_5)=(608,   100,    99,    96,    89)$. 
\vspace{-3mm}
\section{Conclusions}
\vspace{-2mm}
The Poisson gamma belief network is proposed to extract a multilayer deep representation for high-dimensional count vectors, with an efficient upward-downward Gibbs sampler to jointly train all its layers and a layer-wise training strategy to automatically infer the network structure. 
Example results clearly demonstrate the advantages of deep topic models.
For big data problems, in practice  one may rarely has a sufficient budget to allow the first-layer width to grow without bound, thus it is natural to consider a belief network that can use a %multilayer 
deep representation to not only enhance its representation power, but also better allocate its computational resource. 
Our algorithm 
achieves a good compromise between the widths of hidden layers and the depth of the network.

%\vspace{-2mm}
\textbf{Acknowledgements.} 
%\vspace{-2mm}
%This research has been supported by
M. Zhou thanks TACC for computational support. B. Chen thanks the support of the Thousand Young Talent Program of China, NSC-China (61372132), and NCET-13-0945. % by Chinese Central government,
% the National Natural Science Foundation of China (61372132), Program for 
%New Century 
%Excellent Talents in University (NCET-13-0945).
\newpage

\bibliographystyle{unsrt}
\bibliography{References102014}
\pagebreak

\normalsize
\appendix
\begin{center}
\textbf{\large Appendix for The Poisson Gamma Belief Network}
\end{center}
%%%%%%%%%% Merge with supplemental materials %%%%%%%%%%
%%%%%%%%%% Prefix a "S" to all equations, figures, tables and reset the counter %%%%%%%%%%
%\setcounter{equation}{0}
%\setcounter{figure}{0}
%\setcounter{table}{0}
%\setcounter{page}{1}
\setcounter{section}{0}
\section{Comparisons of classification accuracies}
For comparison, we consider the same $L_2$ regularized logistic regression 
multi-class classifier, trained either on the raw word counts or normalized term-frequencies of the 20newsgroups training documents using five-folder cross-validation.
As summarized in Tab. \ref{tab:LR}, when using the raw term-frequency word counts as covariates,  the same classifier achieves $69.8\%$ ($68.2\%$) accuracy on the 20newsgroups test documents if using the top 2000 terms that exclude (include) a standard list of stopwords, achieves $75.8\%$ if using all the $61,188$ terms in the vocabulary, and achieves $78.0\%$ if using the $33,420$  terms remained after removing a standard list of stopwords and the terms that appear less than five times; 
and when using the normalized term-frequencies as covariates, the corresponding accuracies are $70.8\%$ ($67.9\%$) if using the top 2000 terms excluding (including) stopwords, $77.6\%$ with all the $61,188$ terms, and $79.4\%$ with  the $33,420$  selected terms.

\begin{table}[h]
\begin{small}
\caption{\small Multi-class classification accuracy of $L_2$ regularized logistic regression.}\label{tab:LR}
%\vspace{-4mm}
\begin{center}
%\begin{tabular}{c  R{6pc} R{6pc} R{6pc} R{6pc} }
\begin{tabular}{ c c c c}
\toprule
 $V=61,188$ & $V=61,188$ & $V=33,420$ & $V=33,420$ \\
 with stopwords &  with stopwords & remove stopwords  & remove stopwords \\
 with rare words&  with rare words & remove rare words &  remove rare words\\
 raw word counts &  term frequencies & raw word counts &  term frequencies \\
\midrule
 75.8\% & 77.6\% & 78.0\% & 79.4\%\\
\bottomrule
\end{tabular}
\end{center}
%\end{small}
%\end{table}%
%
%
%\begin{table}[h]
%\begin{small}
%\caption{Classification accuracy using $L_2$ regularized logistic regression.}\label{tab:data}
%\vspace{-4mm}
\begin{center}
%\begin{tabular}{c  R{6pc} R{6pc} R{6pc} R{6pc} }
\begin{tabular}{ c c c c}
\toprule
 $V=2000$ & $V=2000$ & $V=2000$ & $V=2000$ \\
  with stopwords &  with stopwords & remove stopwords & remove stopwords \\
 raw counts &  term frequencies & raw counts &  term frequencies \\
\midrule
 68.2\% & 67.9\% & 69.8\% & 70.8\%\\
\bottomrule
\end{tabular}
\end{center}
\end{small}
\end{table}%

As summarized in Tab. \ref{tab:ORS}, for multi-class classification on the same dataset, with a vocabulary size of 2000 that consisits of the 2000 most frequent terms after removing stopwords and stemming,  the DocNADE \cite{larochelle2012neural} and the over-replicated softmax \cite{srivastava2013modeling}  provide the accuracies of $67.0\%$	and $66.8\%$, respectively, for a feature dimension of $K=128$, and  provide   the accuracies of $68.4\%$  and	$69.1\%$, respectively,  for a feature dimension of $K=512$.

\begin{table}[h]
\begin{small}
\caption{\small Multi-class classification accuracy of the DocNADE \cite{larochelle2012neural} and over-replicated softmax \cite{srivastava2013modeling}.}\label{tab:ORS}
\begin{center}
%\begin{tabular}{c  R{6pc} R{6pc} R{6pc} R{6pc} }
\begin{tabular}{ c | c  c}
\toprule
 &$V=2000$, $K=128$ & $V=2000$, $K=512$\\
 &  remove stopwords, stemming &  remove stopwords, stemming \\
\midrule
DocNADE & 67.0\% & 68.4\%\\
Over-replicated softmax &66.8\% & 69.1\%\\
\bottomrule
\end{tabular}
\end{center}
\end{small}
\end{table}%

As summarized in Tab. \ref{tab:PGBN}, with the same vocabulary size of 2000 (but different terms due to different preprocessing),
the proposed PGBN provides $65.9\%$ ($67.5\%$) with $T=1$ ($T=5$) for $K_{1\max}=128$,  and $65.9\%$ ($69.2\%)$ with $T=1$ ($T=5)$ for $K_{1\max}=512$, which may be further improved if we also consider the stemming step, as done in the these two algorithms, for word preprocessing, or if we set the values of  $\eta^{(t)}$ to be smaller than 0.05. We also summarize in Tab. \ref{tab:PGBN} the classification accuracies of the PGBNs learned  with $V=33,420$, as shown in Fig. \ref{fig:full_20news}.

\begin{table}[h]
\begin{small}
\caption{\small Classification accuracy of the PGBN trained with $\eta^{t}=0.05$ for all $t$.}\label{tab:PGBN}
%\end{small}
%\end{table}%
%
%
%
%\begin{table}[h]
%\begin{small}
\begin{center}
%\begin{tabular}{c  R{6pc} R{6pc} R{6pc} R{6pc} }
\begin{tabular}{ c | c  c c}
\toprule
 &$V=2000$, $K_{1\max}=128$  &$V=2000$, $K_{1\max}=256$  & $V=2000$, $K_{1\max}=512$\\
 &  remove stopwords& remove stopwords& remove stopwords \\
\midrule
PGBN ($T=1$) & $65.9\% \pm 0.4\%$& $66.3\% \pm0.4\%$ & $65.9\% \pm0.4\%$\\
%PGBN ($T=2$) & $67.1\% \pm 0.5\%$& $67.9\% \pm0.4\%$ & $68.3\% \pm0.3\%$\\
%PGBN ($T=3$) & $67.3\% \pm0.3\%$ & $68.6\% \pm0.5\%$ & $69.0\% \pm0.4\%$\\
PGBN ($T=5$) & $67.5\% \pm0.4\%$ & $68.8\% \pm0.3\%$ & $69.2\% \pm0.4\%$\\
\bottomrule
\end{tabular}
\end{center}
\begin{center}
%\begin{tabular}{c  R{6pc} R{6pc} R{6pc} R{6pc} }
\begin{tabular}{ c | c  c c}
\toprule
 &$V=33,420$, $K_{1\max}=200$  &$V=33,420$, $K_{1\max}=400$  & $V=33,420$, $K_{1\max}=800$\\
 &  remove stopwords& remove stopwords& remove stopwords \\
 &  remove rare words& remove rare words& remove rare words \\
\midrule
PGBN ($T=1$) & $74.6\% \pm 0.6\%$& $75.3\% \pm0.6\%$ & $75.4\% \pm0.4\%$\\
%PGBN ($T=2$) & $76.0\% \pm 0.6\%$& $76.9\% \pm0.5\%$ & $77.5\% \pm0.4\%$\\
%PGBN ($T=3$) & $76.3\% \pm0.8\%$ & $77.1\% \pm0.6\%$ & $77.8\% \pm0.4\%$\\
PGBN ($T=5$) & $76.4\% \pm0.5\%$ & $77.4\% \pm0.6\%$ & $77.9\% \pm0.3\%$\\
\bottomrule
\end{tabular}
\end{center}
\end{small}
\end{table}%

\section{Generating synthetic documents}
Below we provide several synthetic documents generated from the  PGBN  with $(K_1,\ldots,K_5)=(608,   100,    99,    96,    89)$, %a five-layer PGBN  
which is trained on the training set of the 
20newsgroups corpus with $K_{1\max}=800$ and $\eta^{(t)}=0.05$ for all $t$. 
We set $c_{j'}^{(t)}$ as the median of the inferred $\{c_j^{t}\}_j$ of the training documents for all $t$. Given $\{\Phimat^{(t)}\}_{1,T}$ and $\rv$, We first generate  $\thetav_{j'}^{(T)}\sim\mbox{Gam}\left(\rv,1\big/c_{j'}^{(T+1)}\right)$ and then downward pass it through the network by repeatedly drawing nonnegative real random variables from the gamma distribution as in 
\eqref{eq:PGBN}.
With the simulated $\thetav_{j'}^{(1)}$, we calculate the Poisson rates for all the $V$ words using $\Phimat^{(1)}\thetav_{j'}^{(1)}$ and 
display the top 100 words ranked according to $\Phimat^{(1)}\thetav_{j'}^{(1)}$. Below are some example synthetic  documents generated in this manner, 
which are all easy to interpret and reflect various aspects of the  20newsgroups corpus used to train the PGBN. 
%from which we can observe various general aspects of the 20newsgroups corpus.   % (ranked by their Poisson rates) of each synthetic document.

\begin{itemize}
\item
team game games hockey year cup season playoffs edu win pittsburgh nhl toronto detroit stanley teams montreal play jets pens espn division chicago new penguins pick league players devils rangers wings boston islanders playoff ca series winnipeg gm abc tv playing quebec april time round st vancouver fans best gld bruins coach winner calgary leafs player great watch night patrick vs finals conference final just baseball coverage murray minnesota don won gary points mike like ice kings regular mario played louis caps contact washington selanne norris buffalo columbia keenan star people fan th think canadiens said canada canucks york gerald
%\item
%year game team games god baseball runs season win won hit pitching play run space lost teams think ball believe sox better league edu does fan best home phillies th mets good braves cubs wins morris offense yankees score people al hitting new hitter got say division boston winning san reds second religion christian fans series era true pitch bible early red nl truth york defense evidence toronto base bob st east staff pitchers dodgers question clemens field faith lot jays chicago players christians innings strong years west john scored giants mike pitched batting talent belief rockies hits stadium reason

\item
hall smith players fame career ozzie winfield nolan guys ryan dave baseball eddie murray numbers steve kingman robinson yount morris roger years bsu puckett long joe jackson hung brett garvey deserve robin evans princeton yeah frank ruth kirby rickey pitcher peak yogi hof great sick lee ha aaron johnny darrell santo time greatest stats seasons ron george reardon shortstops henderson hank mays jack liability marginal rogers average compare belong schmidt gibson willie leo ucs sgi bsuvc comment fans honestly deserves cal bell candidates wagner fielding walks ve likely history gee heck consideration mike player bonds lock rating sandberg standards apparent

\item
fbi koresh batf gas compound waco government atf people children tear cult davidians did bd branch agents happened assault warrant david reno tanks killed weapons clinton point country search building federal raid press started reported death proper needed illegal better house protect burned janet outside burn days media stand job arms inside right come cwru equipment followers investigation oldham believe non power kids burning fires women suicide law order cs sick blame initial alive feds agent tank religious automatic davidian deaths knock good hit said military possible died away light fault child witnesses pay instead folks daniel bureau armored going
\item
people government law state israel rights israeli jews right public states war fact political country arab laws article case court human federal american united support society policy civil freedom members national jewish evidence person majority force power legal citizens action crime world act countries issue arabs group police justice non control palestinian live land peace true anti center writes gaza population research constitution death edu org allowed party protection consider actions number adam apc general subject based murder igc considered life military self parties lives personal nation order cpr social question individual religious today situation free responsibility governments palestine innocent
%
%\item
%greeks edu turks turkey turkish greek people writes com article bad person napoleon tankut called bike iastate greece software compromise talk trying uiowa salonica ti dod city goverment conflicts atan aiu things don learn just business fred mau like apr educated politics cyprus dseg today make mccall helmet comes panos think makes know history books positive requirement government french policies visa hatred relations brainwashing tamamidis konstantinople seas kind gwu miles ride ve met suffering aegean thessaloniki doing level good mr started accept removing supports baby countries millions sides treaty aggressive istanbul minorities achtung anatolia ither solun osinski sevres somebody speak

\item
medical health disease doctor pain patients treatment medicine cancer edu hiv blood use years patient writes cause skin don like just aids symptoms number article help diseases drug com effects information doctors infection physician normal chronic think taking care volume condition drugs page says cure people tobacco hicnet know newsletter effective therapy problem common time women prevent surgery children center immune research called april control effect weeks low syndrome hospital physicians states clinical diagnosed day med age good make caused severe reported public safety child said cdc usually diet national studies tissue months way cases causing migraine smokeless infections does

%\item
%men homosexual sex gay sexual homosexuality male don people partners promiscuous number just bi like study homosexuals percent cramer heterosexual think did dramatically numbers straight church reform population know report pythagorean life man good accept time said considered kinsey posted general optilink irrational social gays behavior way children make published johnson survey table new activity showing million statistics american sexuality shows want right women ve article ago exclusively eating virginia masters repent really say purpose member clayton apparent kolodny writes going press society evil function engaged relationships ryan evolutionary different does compared person join figure community edu chose interesting things

\item
card video drivers cards driver vga mode ati graphics windows diamond vesa bus svga support gateway dx pc modes color isa board version local bit memory vlb ultra pro eisa monitor new does mb stealth hz using based speedstar orchid colors available latest ram know work chip performance resolution fast screen speed tech million trident winbench dcoleman set problems yes et ftp results winmarks plus edu bbs zeos utexas vram bios robert win higher magazine utxvms able high interlaced viper com boards site weitek tseng chipset modem turbo software non resolutions far faster accelerated supports price meg ega mhz true

%bike dod don just like think people good know com ride edu writes ve bikes time way make really want ll bmw going say new does article motorcycle right riding rider things did thing point better sure ama let little road doesn need lot probably far look got years honda org hard problem tell motorcycles didn course believe work actually great long real said isn harley try bad doing thought hp maybe come use fact ask yes having getting question street tank mean rear case looking pretty buy trying money cb ranck won advice quite used cost best help idea
%
%\item
%key use chip like encryption don used time just keys bit new people number think des make clipper know does data way good information work law using security algorithm need encrypted government escrow enforcement bits want say serial point ve edu available really random public right writes secure different case message better going secret probably second things technology block order nsa let problem long high com cryptography possible real example little note thing did game chips change access end ll standard sure problems doesn called large provide actually crypt non years agencies given idea privacy unit far able faq best
\item 
card windows video drivers monitor com modem vga cards driver port pc mode screen ati serial graphics dos bus board irq support svga diamond vesa using memory problem dx color gateway file version ports local modes pro bit does isa colors mb know vlb mouse ultra win ram new monitors hz work eisa nec problems chip files stealth use set program speedstar orchid plus high based resolution fast software cable hardware display latest used performance ms like baud bbs tech connector run thanks speed just yes million trident winbench dcoleman available pin ibm uart connect sony window switch et disk
%\item
%bike dod ride bikes com bmw motorcycle riding rider ama road honda org new motorcycles harley hp street tank rear ll cb ranck list gs miles advice good cbr cage buy dog cc mail insurance ve seca send rec vt gear frame bikers cost seat yamaha hard thanks twin helmet parking mitre looking rd jeff faq moto gas tire fj zx suzuki moa sold kz newsgroup leather plastic cookson sale blue buying hey cycle ducati email owners heavy area ask bars randy harleys nice car state old low sr riders edu es ninja chain wrote sport recommend ra vehicle rode
\item
nissan electronics wagon altima delcoelect kocrsv station gm subaru sumax delco spiros hughes wax pathfinder legacy kokomo wagons smorris scott toyota seattleu don just like strong silver software luxury derek proof stanza seattle cisco morris cymbal triantafyllopoulos sportscar think people know near fool ugly proud claims flat statistics lincoln sedans bullet karl lee perth puzzled miata sentra maxima acura infiniti corolla mgb untruth verbatim good time consider way based make stand guys writes noticed want ve heavy suggestion eat steven horrible uunet studies armor fisher lust designs study definately lexus remove conversion embodied aesthetic elvis attached honey stole designing wd

\item
mac apple bit mhz ram simms mb like memory just don cpu people chip chips think color board ibm speed does know se video time machines motherboard hardware lc cache meg ns simm need upgrade built vram good quadra want centris price dx run way processor card clock slots make fpu internal did macs cards ve pin power really machine say faster said software intel macintosh right week writes slot going sx performance things edu years nubus possible thing monitor work point expansion rom iisi ll add dram better little slow let sure pc ii didn ethernet lciii case kind

\item
image jpeg gif file color files images format bit display convert quality formats colors programs program tiff picture viewer graphics bmp bits xv screen pixel read compression conversion zip shareware scale view jpg original save quicktime jfif free version best pcx viewing bitmap gifs simtel viewers don mac usenet resolution animation menu scanner pixels sites gray quantization displays better try msdos tga want current black faq converting white setting mirror xloadimage section ppm fractal amiga write algorithm mpeg pict targa arithmetic export scodal archive converted grasp lossless let space human grey directory pictures rgb demo scanned old choice grayscale compress
\item
gun guns edu writes bike com article weapons dod control crime weapon apr used carry criminals police ride nra bikes self firearms use buy firearm laws concealed bmw defense home handgun criminal motorcycle anti problem car people owners ban rider riding shot just armed new don like crimes assault kill violent protect uio handguns ifi evil ama citizens state org know illegal politics texas thomas thomasp cb talk legal shooting pro road carrying abiding think att honda cs stolen defend good purchase ll law individual hp cc permit rifle issue government states parsli property ve killing federal does motorcycles time

\item
gun guns weapons people control government law crime state rights police laws weapon self criminals carry states public nra used defense firearms anti federal right criminal legal firearm citizens country home political case concealed handgun court fact crimes issue protect armed politics kill ban problem buy national individual support shot society violent use civil war property talk owners assault illegal handguns ifi uio united defend action allowed freedom article american amendment person member power force thomasp car human evidence threat thomas murder shooting majority killed carrying members citizen killing pro abiding group act evil texas america justice permit stolen said

\end{itemize}

\end{document}